\documentclass[11pt]{article}

\usepackage{graphicx}
\usepackage{amsmath}
\usepackage{amsfonts}
\usepackage{amssymb}
\usepackage{amsthm}
\usepackage{mathtools}
\usepackage{booktabs}
\usepackage{float}
\usepackage[font=small,labelfont=bf]{caption}
\usepackage{subcaption}
\usepackage[space]{grffile}
\usepackage{url}
\usepackage{algorithm}
\usepackage{algorithmic}
\usepackage[round]{natbib}
\usepackage[margin=1in]{geometry}
\usepackage{colortbl}
\usepackage[dvipsnames]{xcolor}
\usepackage[
  colorlinks=true,
  linkcolor=blue,
  citecolor=blue!60!black,
  urlcolor=blue!60!black,
  filecolor=black
]{hyperref}

\newtheorem{theorem}{Theorem}[section]
\newtheorem{lemma}[theorem]{Lemma}
\newtheorem{proposition}[theorem]{Proposition}

\theoremstyle{definition}
\newtheorem{definition}[theorem]{Definition}
\theoremstyle{remark}
\newtheorem{remark}[theorem]{Remark}

\DeclareMathOperator*{\argmax}{arg\,max}

\title{Annealed Softmax Greedy in Many-Armed Bayesian Bandits}

\author{William Overman \\ Stanford University \and Mohsen Bayati \\ Stanford University}
\date{}

\begin{document}
\maketitle
\begingroup
  \renewcommand\thefootnote{}
  \footnotetext{Emails: \texttt{wpo@stanford.edu}, \texttt{bayati@stanford.edu}.}
  \addtocounter{footnote}{-1}
\endgroup

\begin{abstract}
Reinforcement learning with verifiable rewards (RLVR) and group-based policy optimization methods such as GRPO update a stochastic policy by sampling multiple completions per prompt and increasing the policy's probability on those with higher reward, regularized by a KL penalty toward a reference policy. These updates, unline the exploration mechanism in Thompson sampling and UCB, do not include explicit mechanisms that track epistemic uncertainty.
This paper studies a stylized explanation for why such uncertainty-agnostic updates can nevertheless be effective.
We analyze an annealed softmax (Boltzmann) policy that selects actions according to a softmax of empirical mean rewards in a many-armed Bayesian Bernoulli bandit.
Under a linear upper-tail condition on the prior (the $\beta=1$ case of $\beta$-regularity), which implies an abundance of near-optimal arms, we prove that annealed softmax greedy achieves Bayes regret $\tilde{O}(m + T/m)$, and in particular $\tilde{O}(\sqrt{T})$ when the number of arms scales as $m = \Theta(\sqrt{T})$. This is the near-optimal Bayes regret rate in this regime, attained also by empirical-mean greedy.
Under $\beta$-regularity many arms keep empirical means near the optimum throughout learning, so the probability that softmax places away from the empirical best falls mostly on other near-optimal arms.
By contrast, with a small number of arms, the same kind of softmax policy can suffer linear regret \citep{cesabianchi2017boltzmann}. The result also provides a structural analogy to RLVR, where a base policy with a non-negligible probability of producing a correct completion plays the role of $\beta$-regularity.
Simulations support the theory and motivate prior-anchored variants of greedy and annealed softmax that score arms by the Beta posterior mean and skip the forced initialization; with an arm-specific prior, accurate or noisy, these variants outperform baselines, including Thompson Sampling, when the number of arms is large.
\end{abstract}

\section{Introduction}
\label{sec:introduction}

Reinforcement learning with verifiable rewards (RLVR) is now a common component of language-model post-training pipelines. In RLVR, a model is treated as a stochastic policy over candidate solutions and is trained using rewards that can be checked automatically, such as exact-match accuracy in mathematics or unit-test success in code. This makes it possible to optimize behavior at scale without relying on human preference labels. Group-based policy optimization methods such as Group Relative Policy Optimization (GRPO) are prominent examples of this paradigm \citep{shao2024deepseekmath}.

RLVR sits somewhat outside the classical exploration picture in reinforcement learning and bandit theory. In those settings, performance is often tied to \emph{epistemic uncertainty}: the learner must gather information to distinguish promising actions from poor ones. Group-based RLVR pipelines, by contrast, sample multiple completions per prompt, raise the policy probability of those with higher reward, and apply a KL penalty toward a reference policy; the randomness comes from repeated sampling rather than from an uncertainty-aware exploration rule. Under verifiable rewards, such updates can be viewed as iteratively amplifying the probability of already successful outputs, with the gain coming largely from redistribution within the model's existing support \citep{liu2025r1zero}. This raises a basic question: when can uncertainty-agnostic reweighting work well in problems that seem to require exploration?

A related discussion concerns the base model's \emph{coverage} of good solutions. Here pass@$k$ denotes the probability that at least one of $k$ independent samples from the model is correct, with pass@1 being the per-sample success rate. One line of work reports that RLVR often improves pass@1 while making little progress on pass@$k$ for large $k$, suggesting that rewarded reasoning paths may already be present in the base model's output distribution \citep{yue2025rlvr}. Other recent work reports settings where RLVR \emph{does} extend reasoning performance, indicating that the empirical picture depends on training details and evaluation choices \citep{cui2025entropy}. We do not try to resolve that broader debate. We instead study a simplified model that isolates one mechanism by which reweighting alone can be effective.

\subsection{A stylized lens: many-armed bandits and annealed softmax}

We consider a stochastic many-armed bandit model with Bernoulli rewards and Beta priors, where each arm represents a ``completion mode'' and reward corresponds to a verifiable success event. Within this model, we analyze an annealed softmax policy that, at each time step $t$, selects an arm with probability proportional to $\exp(\eta_t m_{i,t})$, where $m_{i,t}$ is the empirical mean reward of arm $i$ at time $t$ and $\eta_t$ is an inverse-temperature schedule that increases over time, a setup referred to as \emph{annealing}. The policy does not use an optimism term, a posterior sample, or per-arm confidence intervals.

A classical result of \citet{cesabianchi2017boltzmann} shows that this type of exploration (also referred to as Boltzmann exploration) can suffer linear regret with a small number of arms under any monotone temperature schedule, and that strong guarantees in that regime typically require schedules that adapt to arm-specific information. The standard failure mode is ``cooling too fast'': early noise gets amplified, causing premature concentration on suboptimal actions. At first glance, such results argue against exactly the kind of uncertainty-agnostic annealing that appears in RLVR practice.

The many-armed Bayesian setting allows a different possibility. If the prior places enough mass near the optimum, the action space contains many arms whose rewards are already close to optimal. Even if the algorithm spreads probability across several arms, many of them are good enough that the regret cost remains small. This idea is studied in the many-armed bandit literature, where greedy-style procedures achieve Bayesian regret guarantees under upper-tail regularity conditions on the prior \citep{bayati2020greedy}.

\subsection{Main results and contributions}

The question we study is therefore: can annealed softmax, despite ignoring epistemic uncertainty, inherit the regret behavior of greedy methods in the many-armed Bayesian regime? We answer this positively. The main contributions are:

\begin{enumerate}
    \item \emph{Bayes regret guarantee.} Under the linear upper-tail condition on the prior (the $\beta=1$ case of $\beta$-regularity), annealed softmax achieves Bayes regret $\tilde{O}(m + T/m)$, yielding $\tilde{O}(\sqrt{T})$ when $m = \Theta(\sqrt{T})$. This is the near-optimal Bayes regret rate in this regime (also attained by empirical-mean greedy \citep{bayati2020greedy}), achieved using only empirical-mean scores, with no optimism term, no posterior sample, and no per-arm confidence interval. 

    \item \emph{Mechanism.} Annealed softmax differs from greedy only by placing some probability mass on arms that are not currently empirically best. Under $\beta$-regularity, many arms maintain empirical means close to the optimum throughout learning, so when softmax samples an arm other than the empirically best, that arm tends to be another near-optimal one rather than a clearly inferior one. This contrasts with the failure mode studied by \citet{cesabianchi2017boltzmann}, where the same kind of policy can suffer linear regret with a small number of arms.

    \item \emph{Structural analogue in RLVR.} The $\beta$-regular prior condition has a direct analogue in RLVR: a base policy with non-negligible probability of producing a correct completion plays the role of $\beta$-regularity, with pass@$k$ probing the upper tail of the policy's completion distribution. In that setting, repeated sampling surfaces correct completions reliably, and subsequent reweighting can improve pass@1 without an explicit uncertainty-aware exploration rule. We note that this is only a structural analogy and not a formal result about GRPO; the theorem lives in the non-contextual bandit model, and extending the analysis to the contextual or sequential settings where GRPO operates is an open problem \citep{shao2024deepseekmath,liu2025r1zero,yue2025rlvr,cui2025entropy}.

    \item \emph{Prior-anchored variants.} Simulations  (Section~\ref{sec:experiments}) confirm the theorem's prediction and expose a practical weakness shared by greedy and annealed softmax greedy (ASG): both must pull every arm once before scoring can begin, which costs on the order of $m$ regret when arms are plentiful. We evaluate variants, Prior-Greedy and Prior-ASG, that score arms by the Beta posterior mean instead of the empirical mean and skip the initialization. With an arm-specific prior, even a noisy one, they outperform every baseline including Thompson Sampling and UCB at large $m$. Appendix~\ref{app:prior-theory} outlines how the main theorem should extend to this setting, with the $m$ term of the regret bound removed.
\end{enumerate}

\subsection{Paper organization}

Section~\ref{sec:related} reviews related work. Section~\ref{sec:setup} introduces the many-armed Bayesian Bernoulli bandit model, the $\beta$-regular prior assumption, and notation. Section~\ref{sec:algorithms} presents the greedy baseline and the annealed softmax greedy (ASG) algorithm. Section~\ref{sec:results} states the main regret theorem and its extensions. Section~\ref{sec:experiments} reports simulations and introduces the prior-anchored variants Prior-Greedy and Prior-ASG. Section~\ref{sec:discussion} concludes with discussion. Proofs and additional simulations, as well as discussions, are presented in the appendices.

\section{Related Work}
\label{sec:related}

This paper connects three strands of literature: the theory of Boltzmann exploration in bandits, the many-armed bandit framework with ``free exploration,'' and the empirical RLVR/GRPO pipeline for LLM post-training.
We give a brief overview here and defer a longer survey to Appendix~\ref{app:related}.

\paragraph{Boltzmann exploration and its limitations.}
Boltzmann (softmax) action selection is a standard randomized alternative to greedy or optimistic policies in bandits and RL.
\citet{cesabianchi2017boltzmann} prove that, for fixed-$K$ stochastic $K$-armed bandits, \emph{any monotone} temperature schedule can leads to suboptimal behavior, either exploring too long or committing too early, and propose per-arm learning rates that explicitly track uncertainty as a remedy.
This negative result is the starting point of our analysis: we identify a structural regime (many arms, thick-tailed prior) that circumvents the impossibility without requiring uncertainty-aware schedules.

\paragraph{Many-armed bandits and free exploration.}
When the number of arms is large relative to the horizon, the distribution of arm qualities (rather than per-arm estimation) governs achievable regret.
In the Bayesian setting, \citet{bayati2020greedy} show that under an upper-tail regularity condition on the prior (``$\beta$-regularity''), a subsampled greedy policy achieves Bayes regret $\tilde{O}(\max\{m,T/m\})$, yielding $\tilde{O}(\sqrt{T})$ with $m=\Theta(\sqrt{T})$ arms.
The key mechanism is \emph{free exploration}: discarding a poorly performing arm still leaves many near-optimal alternatives.
Our work extends this viewpoint from greedy to annealed softmax policies, showing that the same prior-tail structure suppresses the ``softmax leakage'' that causes failures in the fixed-$K$ regime.
Related infinite-armed formulations include \cite{berry1997infinitely,wang2009algorithms,carpentier2015simple}.

\paragraph{RLVR, GRPO, and the ``bounded-by-base'' phenomenon.}
Reinforcement learning with verifiable rewards has become a core post-training technique for reasoning-oriented LLMs.
GRPO \cite{shao2024deepseekmath} performs group-sampled policy updates under KL regularization, and recent analyses relate its dynamics to iterative soft reweighting of completions \cite{mroueh2025grpo,liu2025r1zero}.
A central empirical observation is that RLVR often improves pass@1 while failing to improve, or even degrading, large-$k$ pass@$k$, suggesting that optimization primarily redistributes probability mass within the base model's existing support rather than discovering new reasoning paths \cite{yue2025rlvr}; though subsequent work argues this picture is nuanced and depends on training details and evaluation choices \cite{cui2025entropy,wen2025implicit,liu2025prorl}.
Our bandit model provides a stylized formalization of this ``bounded-by-base'' effect: the $\beta$-regular tail condition translates ``good pass@$k$'' into ``many near-optimal arms,'' under which softmax reweighting suffices without explicit exploration.

\section{Problem Setup}
\label{sec:setup}

This section sets up the model and the notation. More specifically, it formalizes the bandit model, defines the prior regularity condition and the Bayes regret criterion, and introduces the empirical-mean notation used in the algorithm. Throughout the paper we assume $T \ge m$.

\subsection{Many-armed Bayesian Bernoulli bandit}
Fix a time horizon $T\in\mathbb{N}$ and consider $m\in\mathbb{N}$ arms indexed by $i\in[m] := \{1,\dots,m\}$.
Each arm has an unknown mean $\mu_i\in[0,1]$ drawn i.i.d.\ from a prior $\Gamma$ on $[0,1]$.
Conditional on $\mu_i$, each arm $i$ has an i.i.d.\ sequence of Bernoulli rewards
\[
X_{i,s}\mid \mu_i \ \stackrel{\text{i.i.d.}}{\sim}\ \mathrm{Bernoulli}(\mu_i)\qquad (s=1,2,\dots),
\]
where $X_{i,s}$ is the reward from the $s$-th pull of arm $i$.
At each round $t\in[T]$, a policy selects an arm $A_t\in[m]$ (possibly randomized and history-dependent) and observes $X_{A_t, N_{A_t}(t)}$, where
\[
N_i(t) := \sum_{s=1}^t \mathbf{1}\{A_s=i\}
\]
is the number of pulls of arm $i$ up to time $t$.

\subsection{Prior regularity: $\beta$-regular upper tail}
We assume $\Gamma$ has a polynomially thick upper tail near $1$.

\begin{definition}[$\beta$-regular prior near $1$]\label{def:beta-regular}
A distribution $\Gamma$ on $[0,1]$ is \emph{$\beta$-regular} if there exist constants $0<c_0\le C_0<\infty$ and $\varepsilon_0\in(0,1)$ such that, for all $\varepsilon\in(0,\varepsilon_0]$,
\[
c_0\,\varepsilon^\beta \ \le\ \Gamma([1-\varepsilon,1]) \ \le\ C_0\,\varepsilon^\beta.
\]
\end{definition}

Similar to \citet{bayati2020greedy}, we will present the main theorem first for the \emph{linear tail} case $\beta=1$, and then state a general-$\beta$ extension as a xremark.

\subsection{Bayes regret and a convenient surrogate}
Let $\mu_* := \max_{i\in[m]} \mu_i$.
The (frequentist) regret of a policy $\pi$ given $\boldsymbol{\mu}=(\mu_1,\dots,\mu_m)$ is
\[
R_T(\pi\mid \boldsymbol{\mu}) \ :=\ \sum_{t=1}^T \big(\mu_* - \mu_{A_t}\big).
\]
The \emph{Bayes regret} is
\[
\mathrm{BR}_{T,m}(\pi)\ :=\ \mathbb{E}\big[R_T(\pi\mid \boldsymbol{\mu})\big],
\]
where the expectation is over $\boldsymbol{\mu}\sim\Gamma^{\otimes m}$ (i.e., $\mu_1,\dots,\mu_m$ are i.i.d.\ draws from $\Gamma$, so $\Gamma^{\otimes m}$ is the $m$-fold product measure on $[0,1]^m$), reward randomness, and any internal randomness of $\pi$.

We will also use the standard surrogate ``regret-to-$1$''
\begin{equation}\label{eq:regret-to-1}
\widetilde{R}_T(\pi\mid\boldsymbol{\mu}) \ :=\ \sum_{t=1}^T (1-\mu_{A_t}) \ =\ \sum_{i=1}^m (1-\mu_i)\,N_i(T),
\end{equation}
which satisfies the exact identity
\begin{equation}\label{eq:proxy-ineq}
R_T(\pi\mid\boldsymbol{\mu}) \ =\ \widetilde{R}_T(\pi\mid\boldsymbol{\mu}) \ -\ T(1-\mu_*).
\end{equation}
Thus
\begin{equation}\label{eq:proxy-ineq-bayes}
\mathrm{BR}_{T,m}(\pi) \ = \ \mathbb{E}\big[\widetilde{R}_T(\pi\mid\boldsymbol{\mu})\big] \ -\ T\,\mathbb{E}[1-\mu_*]
\ \le\ \mathbb{E}\big[\widetilde{R}_T(\pi\mid\boldsymbol{\mu})\big].
\end{equation}
Under 1-regularity, $\mathbb{E}[1-\mu_*]$ is of order $1/m$ (Lemma~\ref{lem:order-stat} below), so the exact identity can sharpen constants, but our main upper bound only needs $\mathrm{BR}_{T,m}(\pi)\le \mathbb{E}[\widetilde{R}_T(\pi\mid\boldsymbol{\mu})]$.

\subsection{Empirical means}
Let $S_i(t)$ be the number of observed successes of arm $i$ up to time $t$, so $0\le S_i(t)\le N_i(t)$.
Because every arm is pulled once during initialization in Algorithms~\ref{alg:greedy}--\ref{alg:asg}, we have $N_i(t)\ge 1$ throughout the greedy/softmax phase.
We therefore use the empirical mean
\begin{equation}\label{eq:post-mean}
\widehat{\mu}_{i,t} \ :=\ \frac{S_i(t)}{N_i(t)}
\end{equation}
as the score of arm $i$ at time $t$.

\paragraph{Asymptotic notation.} Throughout the paper, $\tilde{O}(\cdot)$ hides polylogarithmic factors in $T$ and $m$.

\section{Algorithms}
\label{sec:algorithms}

This section presents the greedy baseline and the annealed softmax greedy (ASG) policy that is the main object of analysis.

\subsection{Greedy baseline on empirical means}

We begin with the natural greedy benchmark, shown in Algorithm~\ref{alg:greedy}. After pulling each arm once so that every arm has an initial estimate, the policy always selects an arm with the largest empirical mean. This is exactly the many-armed greedy rule analyzed by \citet{bayati2020greedy}, and it serves as the baseline throughout the paper.

\begin{algorithm}[h!]
	\caption{Greedy (Empirical-Mean Greedy)}
	\label{alg:greedy}
	\begin{algorithmic}[1]
		\REQUIRE Arms $[m]$, horizon $T$
		\STATE \textbf{Initialization:}
		\FOR{$t = 1, \dots, m$}
		\STATE Pull arm $A_t = t$; observe reward $X_{t,1}$
		\ENDFOR
		\STATE \textbf{Greedy phase:}
		\FOR{$t = m+1, \dots, T$}
		\STATE Compute empirical means $\widehat{\mu}_{i,t-1} = \frac{S_i(t-1)}{N_i(t-1)}$ for all $i \in [m]$
		\STATE Pull arm $A_t \in \argmax_{i \in [m]}\, \widehat{\mu}_{i,t-1}$ \quad (ties broken arbitrarily)
		\STATE Observe reward $X_{A_t, N_{A_t}(t)}$
		\ENDFOR
	\end{algorithmic}
\end{algorithm}

\subsection{Annealed Softmax Greedy on empirical means}

Our main object of study is a randomized analogue of greedy, given in Algorithm~\ref{alg:asg}. Instead of deterministically choosing the empirical-best arm at each round, the policy samples from a softmax distribution over empirical means. Arms with higher empirical means are more likely to be selected, but lower-ranked arms still receive some probability mass. The amount of randomness is controlled by a nondecreasing inverse-temperature schedule $\{\eta_t\}_{t\ge 1}$.

This policy can be viewed as interpolating between uniform sampling and pure greedy behavior. When $\eta_t$ is small, the softmax distribution is relatively flat, so the algorithm spreads probability more evenly across all arms. As $\eta_t$ grows, the distribution becomes more concentrated, and the policy increasingly resembles Greedy. The key question for the rest of the paper is whether this softmax sampling rule, which is agnostic to epistemic uncertainty in arm quality, can achieve near-greedy regret in the many-armed Bayesian regime.

ASG uses exactly the same empirical information as the greedy baseline; it neither estimates uncertainty explicitly nor uses per-arm temperatures that depend on the pull counts $N_i(t)$. \citet{cesabianchi2017boltzmann} prove that any monotone temperature schedule shared across arms can suffer linear regret in some small-arm instance, and propose per-arm scalings as a remedy: their Boltzmann--Gumbel Exploration uses $\sigma/\sqrt{N_i(t)}$, so less-pulled arms have noisier scores and are favored for exploration. We study the unadjusted version because it matches the structure used in group-based methods such as GRPO, where the policy does not adapt its per-completion sampling temperature to historical counts.

\paragraph{Concrete schedules.}
For the analysis, it is convenient to choose a logarithmically increasing schedule of the form
\begin{equation}\label{eq:schedule}
\eta_t \ =\ \frac{c_\eta}{\delta}\log(t\vee 2),
\end{equation}
where $c_\eta>1$ is a fixed constant and $\delta$ is a threshold parameter that will be tuned as a function of $(T,m)$ in the regret bound. Intuitively, $\delta$ sets the scale of what counts as a meaningfully suboptimal arm in the proof. With this choice,
\(
\exp(-\eta_t \delta)=(t\vee 2)^{-c_\eta},
\)
so the probability weight placed on arms that are worse by at least $\delta$ decays polynomially in time. In particular,
\[
\sum_{t=1}^T \exp(-\eta_t\delta)=O(1),
\]
which is exactly what we need to make the total softmax ``leakage'' summable. In the linear-tail case ($\beta=1$), for example, the relevant choice is $\delta \asymp (\log T)/m$, leading to the $\tilde{O}(m + T/m)$ Bayes regret rate (Theorem~\ref{thm:asg-beta1}).

\begin{algorithm}[t!]
\caption{Annealed Softmax Greedy (ASG)}
\label{alg:asg}
\begin{algorithmic}[1]
\REQUIRE Arms $[m]$, horizon $T$, nonnegative nondecreasing inverse-temperature schedule $\{\eta_t\}_{t \ge 1}$ with $\eta_t \to \infty$
\STATE \textbf{Initialization:}
\FOR{$t = 1, \dots, m$}
    \STATE Pull arm $A_t = t$; observe reward $X_{t,1}$
\ENDFOR
\STATE \textbf{Softmax phase:}
\FOR{$t = m+1, \dots, T$}
    \STATE Compute empirical means $\widehat{\mu}_{i,t-1}$ for all $i \in [m]$
    \STATE Sample arm $A_t = i$ with probability
    \begin{equation}\label{eq:softmax}
    p_{t}(i) \ :=\ \frac{\exp(\eta_t\, \widehat{\mu}_{i,t-1})}{\sum_{j=1}^m \exp(\eta_t\, \widehat{\mu}_{j,t-1})}
    \end{equation}
    \STATE Observe reward $X_{A_t, N_{A_t}(t)}$
\ENDFOR
\end{algorithmic}
\end{algorithm}

\section{Main Results}
\label{sec:results}

This section presents the main Bayes regret bound for ASG. 

Before stating our main theorem, we recall the result for the greedy policy (Algorithm~\ref{alg:greedy}). In the Bernoulli many-armed setting with a linear upper tail, \citet{bayati2020greedy} show that empirical-mean greedy achieves near-optimal Bayes regret.

\begin{proposition}[Greedy benchmark in the Bernoulli, $\beta=1$ regime {\citep[Theorem~\textup{(Bernoulli)}]{bayati2020greedy}}]\label{prop:greedy-benchmark}
Assume Bernoulli rewards and a $1$-regular prior $\Gamma$ (Definition~\ref{def:beta-regular} with $\beta=1$).
Then Greedy (Algorithm~\ref{alg:greedy}) achieves
\[
\mathrm{BR}_{T,m}(\text{Greedy}) \ \le\ \tilde{O}\!\left(m + \frac{T}{m}\right),
\]
and in particular for $m=\Theta(\sqrt{T})$ one has $\mathrm{BR}_{T,m}(\text{Greedy})=\tilde{O}(\sqrt{T})$.
\end{proposition}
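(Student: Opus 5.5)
The plan is to bound the surrogate $\mathbb{E}[\widetilde{R}_T]$, which dominates the Bayes regret by \eqref{eq:proxy-ineq-bayes}. The key device is a threshold $\delta \asymp (\log T)/m$, which splits the arms into \emph{good} arms ($\mu_i \ge 1-\delta$) and \emph{bad} arms.

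The initialization phase costs at most $m$. Every pull of a good arm costs at most $\delta$, so good arms contribute at most $T\delta = \tilde O(T/m)$ in total. It therefore remains to control the total number of greedy-phase pulls of bad arms, weighted by their gaps $1-\mu_i$.

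\textbf{Step 1 (a good arm survives).} By 1-regularity, each arm is good with probability at least $c_0\delta$. Hence the number of good arms is $\mathrm{Bin}(m, \ge c_0\delta)$, with mean of order $\log T$. The probability that there is no good arm is at most $(1-c_0\delta)^m \le T^{-c}$, and on that event we simply pay $T$.

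Next I want a good arm whose empirical mean never falls much below $1-\delta$. For a single arm with $\mu_i \ge 1-\delta$, the probability that its empirical mean ever drops below $1-2\delta$ (after its first pull) is bounded below away from 1 only if the first pull succeeds; the first pull succeeds with probability $\ge 1-\delta$. Bernoulli random-walk and Hoeffding/maximal bounds then give $\Pr(\exists s: \hat\mu<1-2\delta)\le$ a constant $q<1$ once $\delta$ is small in a suitable sense. Independence across arms means that, among $\Theta(\log T)$ good arms, with probability $1-T^{-c}$ at least one arm $i^\dagger$ keeps $\hat\mu_{i^\dagger,t}\ge 1-2\delta$ for all $t$.

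I expect this step to be the main obstacle. A naive union bound is too loose at small $N$, where $\hat\mu$ is coarse. What I would try first is to use, as in \citet{bayati2020greedy}, the fact that for $\mu$ near 1 the walk has failures occurring at rate $\delta$, so $\hat\mu \ge 1-2\delta$ fails only with constant probability. If this proves too delicate, I would fall back to the threshold $1-\delta'$ with $\delta' = C\delta$ inflated by log factors, which only changes polylog factors.

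\textbf{Step 2 (bad arms are pulled rarely).} On the event $E$ of Step 1, greedy always pulls an arm with $\hat\mu \ge 1-2\delta$. So a bad arm $i$ with gap $\Delta_i = 1-\mu_i > 4\delta$ is pulled at its $n$-th time only if $\hat\mu_{i,n-1} \ge 1-2\delta$. By Chernoff, the expected number of such pulls is $O(1 + \log T/\Delta_i)$. This is still too much if summed naively over $m$ arms, so I use the Bayesian averaging: conditional on the first pull, arm $i$ is pulled beyond initialization only if it keeps up with an empirical mean of nearly 1. The expected number of extra pulls is then $\sum_n \Pr(\hat\mu_{i,n}\ge 1-2\delta)$, and for $\Delta_i$ large, $\Pr(\text{first } n \text{ pulls nearly all successes})$ decays like $(1-\Delta_i/2)^{n}$. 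This gives cost $\Delta_i \cdot O(1/\Delta_i) = O(1)$ per arm in the worst case. A finer bound integrates $\Gamma$ over $\Delta \in [4\delta, \varepsilon_0]$ with density-type bound $C_0$: the cost per arm is $\mu_i(1-\mu_i)$-small, about $\int \Delta \cdot \min(T, \Delta^{-2}\log T)\,\Gamma(d\Delta) = \tilde O(1)$ by the linear tail, giving $\tilde O(m)$ in total.

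Arms with $\Delta_i\in(\delta,4\delta]$ cost at most $4\delta$ per pull, adding $\tilde O(T/m)$. Combining everything yields $m + \tilde O(T/m) + T\cdot T^{-c} = \tilde O(m+T/m)$. Setting $m = \Theta(\sqrt T)$ gives $\tilde O(\sqrt T)$.
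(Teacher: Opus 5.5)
Your proposal is correct and is essentially the paper's argument. The paper only cites \citet{bayati2020greedy} for this proposition, but its proof of Theorem~\ref{thm:asg-beta1} specializes to Greedy once the leakage term is dropped (with a single always-good arm present, Greedy never selects an arm whose empirical mean is below that arm's), and what remains is exactly your outline: an always-good arm via Lemma~\ref{lem:bern-cross}, a binomial bad event of probability $e^{-cm\delta}$, bad arms pulled only during upward excursions of their own empirical means, and integration over the prior. The one difference is a minor sharpening: your multiplicative Chernoff bound $\sum_n \mathbb{P}(\widehat{\mu}_i(n)\ge 1-2\delta\mid\mu_i)\le 8/\Delta_i$ gives $O(1)$ regret per bad arm directly, where the paper's Hoeffding-plus-Lemma~\ref{lem:tail-moment-general} step costs $\log(1/\delta)$ per arm; your cruder $O(1+\log T/\Delta_i)$ bound, which you called too large, would in fact already have sufficed, since it yields $\tilde O(m)$ in total.
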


\subsection{ASG matches greedy up to a leakage term}

We now turn to our main result for ASG. At a high level, the theorem shows that ASG behaves like greedy plus an additional \emph{softmax leakage} penalty: the probability mass assigned away from the empirically best arms. The key technical point is that in the many-armed, thick-tail regime, this leakage remains controlled because when softmax samples an arm other than the empirically best, that arm tends to be another near-optimal one rather than a clearly inferior one. The extra randomization introduced by softmax therefore does not change the leading Bayes regret scaling.

The bound below decomposes regret into several interpretable terms. The terms $m$ and $T\delta$ are the same coarse baseline contributions that already appear in greedy-style many-armed analyses. The logarithmic term $m(1+\log(1/\delta))$ reflects the cost of integrating over the upper tail. The summation term
\(
\frac{1}{\delta}\sum_{t=1}^T \exp(-\eta_t\delta)
\)
is the softmax-specific leakage term, controlled by the cooling schedule. Finally, the exponentially small term $T\exp(-cm\delta)$ corresponds to the bad event that too few of the $m$ arms are near-optimal.

\begin{theorem}[ASG Bayes regret, Bernoulli, linear tail]\label{thm:asg-beta1}
Assume Bernoulli rewards, and assume the prior $\Gamma$ is $1$-regular (Definition~\ref{def:beta-regular} with $\beta=1$) with regularity constants $(c_0, C_0, \varepsilon_0)$.
Fix any $\delta\in(0,\delta_0]$ with $\delta_0 \le \min\{1/8, \varepsilon_0/8\}$ (which also ensures $\delta < 1/6$, the condition required by Lemma~\ref{lem:bern-cross}).
Run ASG (Algorithm~\ref{alg:asg}) with any nonnegative nondecreasing schedule $\{\eta_t\}_{t\ge 1}$.

Then there exist constants $c,C>0$ (depending only on $c_0, C_0, \varepsilon_0$ and on the Bernoulli crossing constant in Lemma~\ref{lem:bern-cross}) such that
\begin{align}
\mathrm{BR}_{T,m}(\text{ASG})
\ &\le\ C\Bigg[
m
\ +\ T\,\delta
\ +\ m\bigl(1+\log(1/\delta)\bigr)
\ +\frac{1}{\delta}\sum_{t=1}^T \exp(-\eta_t\,\delta)
\ +\ T\,\exp(-c\,m\,\delta)
\Bigg].\label{eq:asg-main-bound}
\end{align}

In particular, choosing
\begin{equation}\label{eq:delta-choice-beta1}
\delta \ =\ \min\!\left\{\delta_0,\, A\frac{\log(T\vee 2)}{m}\right\},
\qquad
\eta_t \ =\ \frac{c_\eta}{\delta}\log(t\vee 2)\quad(c_\eta>1),
\end{equation}
with $A > 1/c$, we have
\[
\mathrm{BR}_{T,m}(\text{ASG}) \ =\ \tilde{O}\!\left(m+\frac{T}{m}\right)\,,
\]
and for $m=\Theta(\sqrt{T})$ one obtains $\mathrm{BR}_{T,m}(\text{ASG})=\tilde{O}(\sqrt{T})$.
\end{theorem}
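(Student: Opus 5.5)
We bound the surrogate $\mathbb{E}[\widetilde R_T]$ from \eqref{eq:proxy-ineq-bayes}, using the proxy $1-\mu_i$ in place of $\mu_*-\mu_i$. First we classify arms by the threshold $\delta$. An arm is \emph{good} if $\mu_i\ge 1-\delta$, \emph{moderate} if $1-2\delta\le\mu_i<1-\delta$ (or some constant multiple of $\delta$), and \emph{bad} otherwise. Since $\Gamma$ is $1$-regular and $\delta\le\varepsilon_0/8$, each arm is good with probability at least $c_0\delta$. The number of good arms is then $\mathrm{Bin}(m,\ge c_0\delta)$, and by a Chernoff bound the event $\mathcal{E}$ that there are at least $c_0m\delta/2$ good arms fails with probability at most $\exp(-cm\delta)$. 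On $\mathcal{E}^c$ we bound regret trivially by $T$, which gives the term $T\exp(-cm\delta)$. The initialization phase costs at most $m$. Every pull of a good or moderate arm costs $O(\delta)$, so all such pulls together contribute $O(T\delta)$.

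The main work is bounding the expected number of pulls of bad arms on $\mathcal{E}$. We split each such pull by whether the current empirical maximum $\max_j\widehat\mu_{j,t-1}$ is at least $1-\delta$ or not. Suppose first that it is. A bad arm $i$ with $\widehat\mu_{i,t-1}\le 1-2\delta$ then receives probability $p_t(i)\le\exp(-\eta_t\delta)$. A bad arm with $\widehat\mu_{i,t-1}>1-2\delta$ is instead overestimated by more than the gap $\Delta_i-2\delta$, where $\Delta_i=1-\mu_i$. For the overestimated case we invoke the Bernoulli crossing bound (Lemma~\ref{lem:bern-cross}). In the usual way, it shows that the number of pulls at which arm $i$'s empirical mean is above $1-2\delta$ has expectation $O(1/\Delta_i^{\,2})$, up to constants, for Bernoulli rewards near $1$. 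Sharper variance-aware versions give $O(1/\Delta_i)$, since the variance is about $\Delta_i$. Each such pull costs $\Delta_i$, so the total over arm $i$ is $O(1)$, and integrating against the prior tail on $\Delta_i\in[2\delta,1]$ with density bounded by $C_0$ yields $m(1+\log(1/\delta))$. For the leakage, the probability mass on arms whose empirical score is $\delta$ below the max is at most $m\exp(-\eta_t\delta)$ in the crude form. To obtain the sharper $\frac1\delta\sum_t\exp(-\eta_t\delta)$, we instead use the fact that on $\mathcal{E}$ there are about $m\delta$ good arms. Those with empirical mean near $1$ contribute denominator mass of order $m\delta\, e^{\eta_t(1-\delta)}$, and this cancels the factor $m$ up to $1/\delta$.

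Suppose next that the empirical maximum is below $1-\delta$. Then every good arm has been driven below its mean. Because a good arm that is still pulled sees its empirical mean return upward, and because softmax assigns it nonzero probability, we must show this happens for only a bounded number of rounds. We would follow the greedy argument of \citet{bayati2020greedy}: with $\Omega(m\delta)$ good arms, each good arm independently has a constant probability that its empirical mean never drops below $1-\delta$ (again a Bernoulli crossing fact). The chance that all of them fail is then $\exp(-cm\delta)$, which is absorbed into the last term.

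The main obstacle is the interaction between the random softmax sampling and the crossing lemma. Pull counts are no longer greedy-determined, so we must argue via the reward-table (stack-of-rewards) representation, in which crossing events for each arm depend only on its own i.i.d. reward sequence regardless of when it is pulled. Getting the leakage term in the $1/\delta$ form rather than $m$ also requires the good-arm denominator argument. Finally, plugging in $\delta=A\log T/m$ with $A>1/c$ makes $T\exp(-cm\delta)\le T^{1-Ac}=O(1)$, $T\delta=\tilde O(T/m)$, and the leakage equals $\frac1\delta\sum_t t^{-c_\eta}=O(m/\log T)$, giving $\tilde O(m+T/m)$.
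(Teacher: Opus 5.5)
Your overall structure matches the paper's: a trivial bound off a many-good-arms event, $O(T\delta)$ for near-optimal pulls, a per-arm spike count integrated against the tail, and a leakage term, all in the reward-table representation. The gap is in the step that yields the leakage in the form $\frac1\delta\sum_t e^{-\eta_t\delta}$.

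Your event $\mathcal{E}$ counts only arms whose \emph{true} mean is at least $1-\delta$. It says nothing pathwise about their \emph{empirical} means. Under adaptive softmax sampling you cannot assert that $\Omega(m\delta)$ of them sit near $1$ at every round, so the claimed denominator mass $m\delta\,e^{\eta_t(1-\delta)}$ is unjustified.

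The missing idea is to build the crossing requirement into the arm class \emph{before} applying Chernoff. Define $\mathcal{G}_i=\{\mu_i\ge 1-\delta\}\cap\{\widehat\mu_i(n)>1-2\delta\ \forall n\ge1\}$ on arm $i$'s own reward sequence. These indicators are i.i.d.\ across arms with mean $p_\delta\ge C_{\mathrm{Bern}}c_0\delta$ by Lemma~\ref{lem:bern-cross}. Chernoff then gives $r\asymp m\delta$ such arms outside probability $e^{-cm\delta}$. Because every arm is pulled once at initialization, these $r$ arms have scores above $1-2\delta$ at every round of the softmax phase. Lemma~\ref{lem:softmax-pointwise} then gives $p_t(i)\le r^{-1}e^{-\eta_t\delta}$ for any arm with score below $1-3\delta$, and $m/r\lesssim 1/\delta$.

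You already have both ingredients, but you use the crossing lemma only in your second case, to secure a \emph{single} such arm. With one reference arm, the leakage bound you can actually prove is $m\sum_t e^{-\eta_t\delta}$. That suffices for the tuned schedule, where the sum is $O(1)$. It does not give \eqref{eq:asg-main-bound} for an arbitrary nondecreasing schedule: when $m\delta\gg1$ it can exceed the whole right-hand side even while that side is below $T$. With the always-good event in place, your case split on the empirical maximum also becomes unnecessary, since the second case is empty on that event.

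Three smaller repairs are needed.
\begin{itemize}
\item The number of pulls at which a bad arm is overestimated comes from Hoeffding summed over pull counts (Lemma~\ref{lem:posterior-spikes}). The crossing lemma does not give it; that lemma only lower-bounds the probability that a high-mean arm never dips.
\item You need a buffer between the bad-arm cutoff and the spike threshold. With threshold $1-2\delta$ and bad arms $\Delta_i>2\delta$, the bound $\Delta_i/(2(\Delta_i-2\delta)^2)$ need not be integrable near $\Delta_i=2\delta$. The paper uses threshold $1-3\delta$ and cutoff $\Delta_i\ge4\delta$. Then $\Delta_i\le4(\Delta_i-3\delta)$, so the per-arm cost is $O(1+1/(\Delta_i-3\delta))$. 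This integrates to $O(1+\log(1/\delta))$ using only $F(u)\le C_0u$ and integration by parts; $1$-regularity gives no density.
\item Lemma~\ref{lem:bern-cross} at level $1-\delta$ requires $\mu_i\ge 1-\delta/2$. Your good-arm threshold and crossing level must therefore be separated by a factor of two, as in the paper.
\end{itemize}
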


The proof is given in Section~\ref{sec:proofs}. 

\begin{remark}[General $\beta$ (statement-level)]\label{rem:general-beta}
The linear-tail case $\beta=1$ is the cleanest to state, but the same proof strategy extends to general $\beta>0$, paralleling the corresponding generalization of the greedy benchmark in \citet{bayati2020greedy}. Repeating the proof with Lemma~\ref{lem:pdelta-lower} replaced by the general-$\beta$ analogue $p_\delta \ge c\,\delta^\beta$ (which follows from the same crossing argument together with the $\beta$-regularity bound $\Gamma([1-\delta,1]) \ge c_0 \delta^\beta$), and with the corresponding many-good-arms bound $\mathbb{P}(M(\delta) < r) \le \exp(-c m \delta^\beta)$, yields a bound of the form
\[
\mathrm{BR}_{T,m}(\text{ASG})
\ \le\ \tilde{O}\!\left(
m \ +\ T\delta
\ +\ m\cdot \mathfrak{M}_\beta(\delta)
\ +\frac{1}{\delta^\beta}\sum_{t=1}^T e^{-\eta_t\delta}
\ +\ T e^{-c m\delta^\beta}
\right),
\]
where $\mathfrak{M}_\beta(\delta)$ captures the tail-moment scaling from Lemma~\ref{lem:tail-moment-general}. In particular,
\[
\mathfrak{M}_1(\delta)=1+\log(1/\delta), \qquad
\mathfrak{M}_\beta(\delta)=O(1)\ \text{for }\beta>1, \qquad
\mathfrak{M}_\beta(\delta)=\Theta(\delta^{-(1-\beta)})\ \text{for }\beta\in(0,1).
\]
Optimizing this bound with $\delta \asymp (\log T/m)^{1/\beta}$ (paralleling the $\beta=1$ specialization) yields rates of the form $\tilde{O}\!\left(m + T(\log T/m)^{1/\beta}\right)$ for $\beta \ge 1$, matching the corresponding generalization of the greedy benchmark in \citet{bayati2020greedy}. For $\beta < 1$, the same procedure yields a similar expression with an additional $\mathfrak{M}_\beta$ contribution. The optimized rate is \emph{not} $\tilde{O}(m + T/m)$ outside $\beta=1$. The qualitative message is preserved: thicker upper tails make softmax leakage cheaper, and thinner tails give correspondingly slower rates.
\end{remark}

\section{Simulation Experiments}
\label{sec:experiments}

This section reports simulations on many-armed Bernoulli bandits: a comparison of Scheduled ASG against Pure Greedy, Thompson Sampling, and a KL-anchored variant of ASG across three prior regimes and two horizons (Section~\ref{sec:sim-baselines}), an evaluation of prior-anchored variants of Greedy and ASG that remove the forced initialization phase (Section~\ref{sec:sim-prior}), and a check that the tail index of the arm distribution drives the effect (Section~\ref{sec:sim-beta}).

\subsection{Setup}
\label{sec:sim-setup}

\paragraph{Bandit and prior regimes.}
Each trial draws $m$ arm means i.i.d.\ from $\mathrm{Uniform}([0,1])$; rewards are Bernoulli$(\mu_i)$ conditional on the drawn means. We report final cumulative regret $R_T$ averaged over independent trials, with $95\%$ confidence bands computed as $\pm 1.96 \cdot \mathrm{SE}$ across trials.

Algorithms that use a prior receive a per-arm Beta prior, which we parameterize by its mean and its weight: $\mathrm{Beta}(\lambda\nu_i,\, \lambda(1-\nu_i))$ has mean exactly $\nu_i$, and $\lambda$ acts as a pseudo-count that sets how many observations the prior is worth. We test three regimes.
\begin{itemize}
    \item \textit{Uninformative:} $\mathrm{Beta}(1,1)$ for every arm, that is, $\nu_i = 1/2$ and $\lambda = 2$. This is the setting the theorem covers directly.
    \item \textit{Informative:} $\nu_i = \mu_i$ (clipped to $[0.01, 0.99]$) with $\lambda = 4$. The prior mean of each arm equals its true mean.
    \item \textit{Misspecified:} $\nu_i = \mathrm{clip}(\mu_i + \varepsilon_i,\, 0.01,\, 0.99)$ with $\varepsilon_i \sim \mathcal{N}(0, \sigma^2)$, $\sigma = 0.3$, and $\lambda = 4$. The prior still ranks arms better than chance, but the noise is large relative to the $[0,1]$ range of the means, so individual prior means are often far off.
\end{itemize}
The three regimes separate two effects. The uninformative regime tests whether Scheduled ASG matches the theorem's rate. The other two show how algorithms that use the prior respond as prior quality drops.

\paragraph{Algorithms.}
Let $\widehat{\mu}_{i,t}$ denote the empirical mean reward of arm $i$ after $t$ rounds, let $S_i(t)$ and $N_i(t)$ denote its success count and pull count, and let $\bar\mu_{i,t} := (\lambda\nu_i + S_i(t))/(\lambda + N_i(t))$ denote its posterior mean under the arm's Beta prior; unlike the empirical mean, $\bar\mu_{i,t}$ is defined before the arm is ever pulled, where it equals $\nu_i$. Let $\eta_t = (c_\eta / \delta)\log(t \vee 2)$ with $c_\eta = 2$ and $\delta = A\log(T \vee 2)/m$, $A = 1$: the schedule form~\eqref{eq:schedule} with the theorem's choice of $\delta$, except that we drop the cap $\delta \le \delta_0$ from~\eqref{eq:delta-choice-beta1}, which would bind at the smallest arm counts.
\begin{itemize}
    \item \textit{Thompson Sampling.} At round $t$, sample $\theta_i \sim \mathrm{Beta}(\lambda\nu_i + S_i(t-1),\, \lambda(1-\nu_i) + N_i(t-1) - S_i(t-1))$ independently for each arm and pull the arm with the largest $\theta_i$.
    \item \textit{Pure Greedy} (Algorithm~\ref{alg:greedy}). Pull each arm once during a forced initialization phase of $m$ rounds, then pull $\argmax_i \widehat{\mu}_{i,t-1}$ at every subsequent round $t$.
    \item \textit{Scheduled ASG} (Algorithm~\ref{alg:asg}). Same forced initialization, then sample from softmax$(\eta_t \widehat{\mu}_{i,t-1})$.
    \item \textit{Scheduled ASG with KL anchor.} Same initialization and schedule, but sample with probability proportional to $\pi_0(i)\,\exp(\eta_t \widehat{\mu}_{i,t-1})$, where $\pi_0(i) \propto \exp(\eta_t \nu_i)$. This biases sampling toward arms the prior rates highly; under a uniform prior it reduces to Scheduled ASG.
    \item \textit{Prior-Greedy.} Score arms by the posterior mean and pull $\argmax_i \bar\mu_{i,t-1}$ from the first round, with no forced initialization.
    \item \textit{Prior-ASG.} Same score, sampled through the softmax: $A_t \sim \mathrm{softmax}(\eta_t \bar\mu_{i,t-1})$ from the first round, again with no forced initialization.
\end{itemize}
UCB1 was also run. Its regret keeps growing with $m$ across the entire grid, roughly an order of magnitude above every other algorithm, and would dominate the axes; this matches the many-armed failure mode discussed by \citet{bayati2020greedy}, and Appendix~\ref{app:ucb1-supplement} reports it separately.

\paragraph{Horizons and grids.}
We use $T = 5000$ with $m \in \{5, 10, 20, 50, 100, 200, 400\}$ and $300$ trials per point, and $T = 50000$ with $m \in \{20, 50, 100, 200, 500, 1000, 2000\}$ and $60$ trials per point. Both grids extend well below and well above $m^\star := \sqrt{T}$, the arm count at which the theorem's $m + T/m$ rate is smallest ($m^\star \approx 71$ and $m^\star \approx 224$ respectively). A vertical dashed line marks $m^\star$ in every panel.

\subsection{Scheduled ASG against the baselines}
\label{sec:sim-baselines}

Figure~\ref{fig:regret-baselines} compares Thompson Sampling, Pure Greedy, Scheduled ASG, and the KL variant. Rows are the two horizons; columns are the three prior regimes.

\paragraph{The regime the theorem describes.}
In the uninformative panels, Scheduled ASG and Pure Greedy sit on the same curve from about $m = 50$ onward at both horizons, and both follow the $m + T/m$ shape of Theorem~\ref{thm:asg-beta1} (grey dotted reference) through and past $m^\star$.  This is the empirical content of the theorem. Adding softmax randomization to greedy costs nothing in the many-armed regime.

\paragraph{Below $m^\star$, randomization helps.}
The left end of the uninformative panels shows a real gap between the two uncertainty-agnostic algorithms. At $T = 50000$, $m = 20$, Pure Greedy averages regret $1056$: it pulls each arm once, commits to the best of $20$ single Bernoulli samples, and that early winner is often the wrong arm. Scheduled ASG keeps some probability on the other arms and cuts the average to $604$. Thompson Sampling reaches $102$ at the same point, and this advantage is expected: it is the one algorithm in the figure that tracks epistemic uncertainty, and with few arms that information prevents early commitment. The comparison reverses in the many-armed range that the theorem is about. By $m = 50$ at $T = 5000$, and by $m = 100$ at $T = 50000$, the greedy-family algorithms have caught Thompson Sampling, and beyond $m^\star$ they stay ahead (at $T = 5000$, $m = 400$: Scheduled ASG $398$, Thompson Sampling $444$).

\paragraph{When the prior carries information.}
Pure Greedy and Scheduled ASG ignore the prior, so their curves barely move across the three columns (the differences at small $m$ are within the confidence bands). Thompson Sampling uses the prior through posterior sampling and noticeably improves in the informative regime. The KL variant sits between the two: the anchor $\pi_0$ injects the prior signal into sampling, but less directly than posterior sampling does. Under misspecification the anchor can hurt. At $T = 50000$, $m = 20$, the KL variant averages $1448$ against $721$ for plain Scheduled ASG, because it keeps steering probability toward arms the noisy prior overrates. Thompson Sampling stays ahead at small $m$ even with the misspecified prior, though its confidence band widens from about $\pm 15$ (informative) to $\pm 101$ (misspecified): the outcome now depends on which arms the prior error affects.

\paragraph{The shared cost at large $m$.}
In the right end of every panel, the three greedy-family curves grow linearly in $m$: the forced initialization alone costs about $m/2$ in expectation, since every arm must be pulled once and the average arm mean is $1/2$. Thompson Sampling needs no initialization, and with an informative prior it grows more slowly, but under the uninformative prior it ends slightly above the others ($2148$ at $T = 50000$, $m = 2000$). The next subsection asks whether the prior can remove this cost.

\begin{figure}[t]
    \centering
    \includegraphics[width=\textwidth]{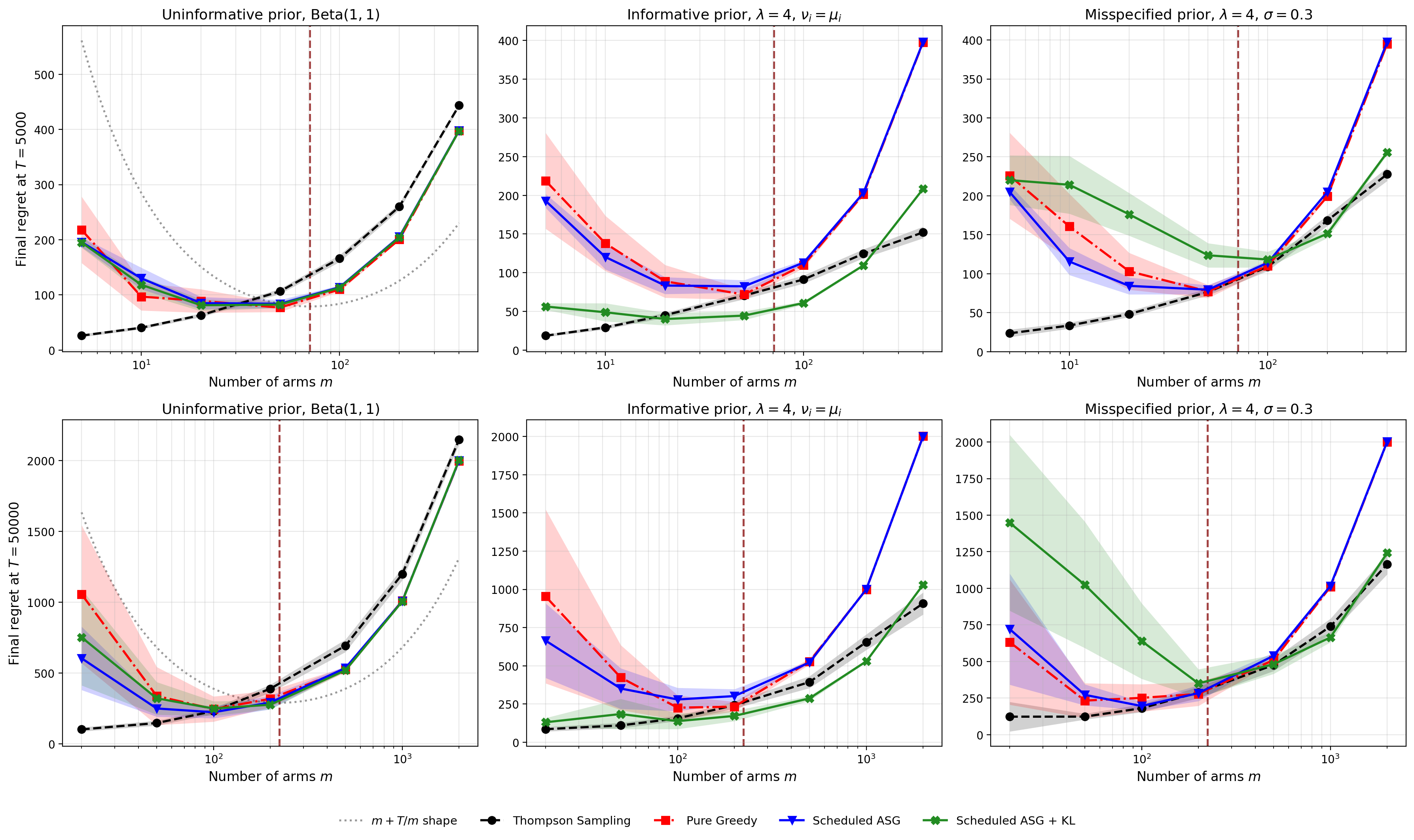}
    \caption{Final cumulative regret against the number of arms $m$ for the baseline algorithms. Top row: $T = 5000$ ($300$ trials per point). Bottom row: $T = 50000$ ($60$ trials per point). Columns: the three prior regimes of Section~\ref{sec:sim-setup}. Bands are $95\%$ confidence intervals. The vertical dashed line marks $m^\star = \sqrt{T}$; the grey dotted curve in the uninformative panels is the $m + T/m$ shape of Theorem~\ref{thm:asg-beta1}, scaled to pass through Scheduled ASG at $m = 50$ (top) and $m = 200$ (bottom). Scheduled ASG matches Pure Greedy across the many-armed range, softens Greedy's early-commitment penalty at small $m$, and moves ahead of Thompson Sampling beyond $m^\star$ in the uninformative regime.}
    \label{fig:regret-baselines}
\end{figure}

\subsection{Prior-Greedy and Prior-ASG}
\label{sec:sim-prior}

Greedy and ASG pull every arm once because the empirical mean of an unpulled arm is undefined; that is the cost the previous subsection ended on, and Thompson Sampling avoids it by starting from a prior. The same repair is available to Greedy and ASG: score arms by the posterior mean $\bar\mu_{i,t-1}$, which is defined before any pull, and drop the initialization. This gives Prior-Greedy and Prior-ASG as defined in Section~\ref{sec:sim-setup}. The structure matches LLM post-training, where the base model already assigns a probability to every completion before training starts and reweighting operates on top of that base policy rather than sampling every completion once first.

But we need to select the prior appropriately. Specifically, under the uninformative $\mathrm{Beta}(1,1)$ prior, every unpulled arm has score $1/2$. The first arm that happens to return a success jumps to $2/3$, and the $\argmax$ rule then repeats it for a long time whether or not the arm is good. Because of this, in the uninformative panels we instead run Prior-Greedy and Prior-ASG with a common optimistic prior, $\nu = 0.9$ and $\lambda = 1$. Every unpulled arm then starts at score $0.9$. A single failure drops an arm to $0.45$, below every unpulled arm, and the policy moves on to a fresh one; a single success raises it to $0.95$, and the policy stays. Appendix~\ref{app:nu-ablation} varies $\nu$ and traces the transition from the commitment failure at $\nu = 1/2$ to this rotation behavior at $\nu = 0.9$.

Figure~\ref{fig:regret-prior} adds the two variants to the comparison of Figure~\ref{fig:regret-baselines}.

\paragraph{Informative prior.}
Prior-Greedy and Prior-ASG nearly remove regret at large $m$. At $T = 5000$, $m = 400$, they average $9.8$ and $9.4$; every other algorithm is above $150$. At $T = 50000$, $m = 2000$, Prior-ASG averages $77$ and Prior-Greedy $156$, against $908$ for Thompson Sampling, $1031$ for the KL variant, and roughly $2000$ for Pure Greedy and Scheduled ASG. The prior already ranks the arms correctly, the posterior mean starts from that ranking, and with many arms whose prior means sit near the top, even the softmax spread of Prior-ASG stays on near-optimal arms.

\paragraph{Misspecified prior.}
With $\sigma = 0.3$ the prior's ranking is noisy, and at small $m$ the variants inherit its mistakes: at $T = 50000$, $m = 20$, Prior-ASG averages $588$ against $123$ for Thompson Sampling. The ordering reverses as $m$ grows. At $m = 2000$, Prior-Greedy and Prior-ASG average $177$ and $174$, against $1166$ for Thompson Sampling, $1242$ for the KL variant, and roughly $2000$ for Pure Greedy and Scheduled ASG. With many arms, even a noisy ranking places some genuinely good arms near the top, and a few pulls correct the prior's individual errors; the baselines meanwhile pay the full initialization.

\paragraph{Uninformative prior.}
The common optimistic prior is a partial substitute for real prior information, and its value depends on the horizon. At $T = 5000$, Prior-ASG tracks the baselines through the middle of the grid and is already ahead by $m = 100$, while Prior-Greedy runs above them near $m^\star$; from $m = 200$ on both variants win clearly. At $T = 50000$ the results are mixed. Prior-ASG starts level with the baselines at $m \le 50$, Prior-Greedy only at $m = 20$; both fall behind through the middle of the grid, and they move clearly ahead only at $m = 2000$. A longer horizon shrinks the initialization cost relative to the total, while a common optimistic prior carries no information about which arms are good and keeps cycling through bad ones.

\paragraph{Takeaway.}
Whenever an arm-specific prior signal exists, the prior-anchored variants win at large $m$, and they do so with the same kind of uncertainty-agnostic rule: a point-estimate score for each arm, with no posterior sample and no per-arm bonus. Appendix~\ref{app:prior-theory} outlines how the analysis behind Theorem~\ref{thm:asg-beta1} could extend to Prior-ASG under a calibrated prior, and why the common optimistic variant falls outside that argument.

\begin{figure}[t]
    \centering
    \includegraphics[width=\textwidth]{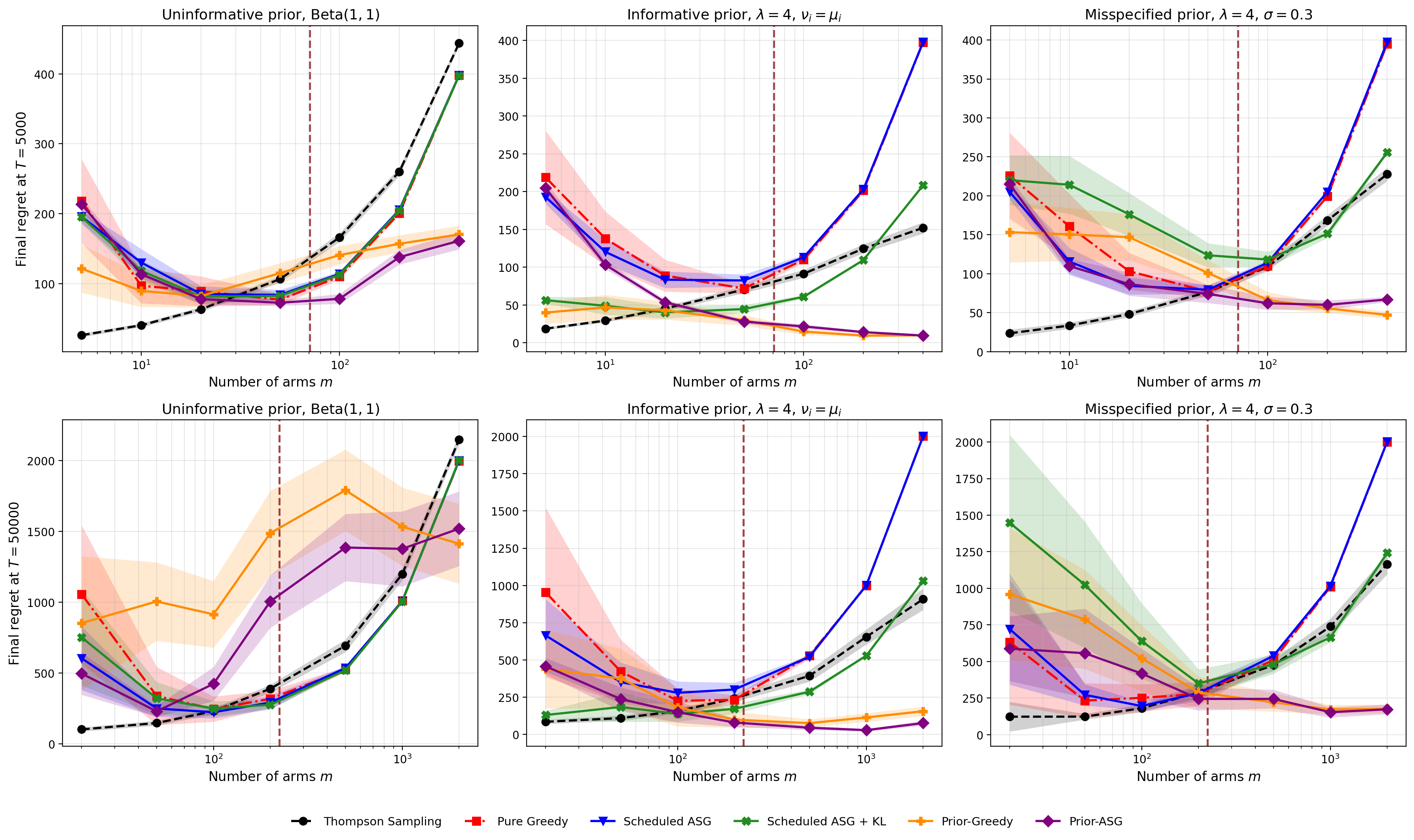}
    \caption{Prior-Greedy and Prior-ASG added to the comparison of Figure~\ref{fig:regret-baselines}; layout, bands, and the $m^\star$ line as before. In the informative and misspecified columns the two variants use the regime's per-arm prior; in the uninformative column, where no arm-specific signal exists, they use a common optimistic prior with $\nu = 0.9$ and $\lambda = 1$. With an informative prior they nearly remove regret at large $m$; with a noisy prior they lose to Thompson Sampling at small $m$ but win at large $m$; with no signal the optimistic device wins at large $m$ at $T = 5000$, while at $T = 50000$ it trails the baselines through the middle of the grid and moves clearly ahead only at $m = 2000$.}
    \label{fig:regret-prior}
\end{figure}

\subsection{Varying the tail index}
\label{sec:sim-beta}

The analysis attributes the good behavior of the uncertainty-agnostic algorithms to the thick upper tail of the arm distribution, so we vary that thickness directly. Drawing the arm means from $\mathrm{Beta}(1, \beta)$ gives $\Gamma([1-\varepsilon, 1]) = \varepsilon^\beta$ exactly, so $\beta$ is the regularity index of Definition~\ref{def:beta-regular}, and $\beta = 1$ recovers the $\mathrm{Uniform}([0,1])$ draws used above. We run Pure Greedy and Scheduled ASG (same schedule as in Section~\ref{sec:sim-setup}) over $\beta \in \{0.3, 0.5, 0.7, 1, 1.5, 2, 3, 5\}$ and $m \in \{50, 100, 200, 400\}$ at $T = 5000$, with $100$ trials per point.

Figure~\ref{fig:regret-beta} shows the result. Regret rises as the tail thins in every panel, consistent with Remark~\ref{rem:general-beta}, and the rise is far steeper when arms are scarce: from $\beta = 0.3$ to $\beta = 5$, Pure Greedy's regret grows about ninefold at $m = 50$ ($43$ to $389$) but only $2.3$-fold at $m = 400$ ($266$ to $613$). This matches the mechanism behind the theorem. A thick tail supplies many near-optimal arms, which is what makes acting on point estimates alone safe; when the tail is thin and arms are few, near-optimal arms are rare and the same rule becomes costly.

\begin{figure}[t]
    \centering
    \includegraphics[width=\textwidth]{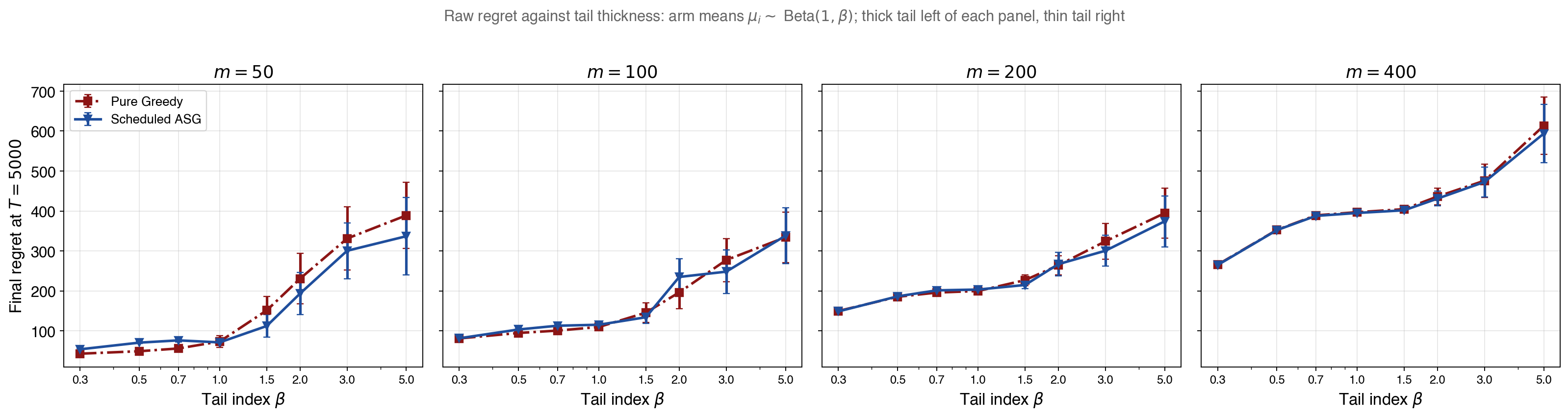}
    \caption{Final regret of Pure Greedy and Scheduled ASG at $T = 5000$ with arm means drawn from $\mathrm{Beta}(1, \beta)$, for which $\Gamma([1-\varepsilon, 1]) = \varepsilon^\beta$; $\beta = 1$ is the uninformative setting of the earlier figures. One panel per arm count; bars are $95\%$ confidence intervals over $100$ trials. Regret grows as the tail thins, and much faster at small $m$.}
    \label{fig:regret-beta}
\end{figure}

\section{Discussion}
\label{sec:discussion}

This paper studies whether annealed softmax over empirical mean rewards, despite ignoring epistemic uncertainty, can achieve near-optimal Bayes regret in the many-armed Bayesian Bernoulli bandit. Theorem~\ref{thm:asg-beta1} answers this positively in the $1$-regular case of the upper-tail framework introduced in prior literature for empirical-mean greedy: ASG attains the same near-optimal rate, up to logarithmic factors.

The mechanism is the abundance of near-optimal arms under $\beta$-regularity: with high probability, many arms maintain empirical means close to the optimum throughout learning. When softmax samples an arm other than the empirically best, that arm tends to be another near-optimal one rather than a clearly inferior one, so the price of softmax randomization is paid on near-optimal arms and contributes little to regret. Our proof builds on the analysis strategy of \citet{bayati2020greedy} for greedy, adding control of a softmax leakage term that quantifies the cost of placing some probability away from the empirical argmax.

Two limitations remain. First, the model is narrow: we study i.i.d.\ Bayesian Bernoulli bandits, with no context, no shared structure across actions, and no sequential state dynamics. Second, the guarantees are Bayesian rather than minimax, and rely on upper-tail regularity of the prior; if near-optimal arms are rare, the mechanism weakens.

The most direct extension is to structured or contextual action spaces, and to sequential RL settings where ``many near-optimal actions'' might be replaced by many near-optimal trajectories or completions. More broadly, an open question is to characterize the boundary of the phenomenon: precisely when does upper-tail abundance make uncertainty-agnostic reweighting effective, and when do the classical Boltzmann failure modes re-emerge?

\bibliographystyle{plainnat}
\bibliography{main}

@article{thompson1933likelihood,
  title        = {On the likelihood that one unknown probability exceeds another in view of the evidence of two samples},
  author       = {Thompson, William R.},
  journal      = {Biometrika},
  volume       = {25},
  number       = {3/4},
  pages        = {285--294},
  year         = {1933},
  doi          = {10.2307/2332286},
  url          = {https://doi.org/10.2307/2332286}
}

@inproceedings{agrawal2012analysis,
  title        = {Analysis of {T}hompson Sampling for the Multi-armed Bandit Problem},
  author       = {Agrawal, Shipra and Goyal, Navin},
  booktitle    = {Proceedings of the 25th Annual Conference on Learning Theory (COLT)},
  series       = {Proceedings of Machine Learning Research},
  volume       = {23},
  pages        = {39.1--39.26},
  year         = {2012},
  publisher    = {PMLR},
  url          = {https://proceedings.mlr.press/v23/agrawal12.html}
}

@article{lai1985adaptive,
  title        = {Asymptotically efficient adaptive allocation rules},
  author       = {Lai, Tze Leung and Robbins, Herbert},
  journal      = {Advances in Applied Mathematics},
  volume       = {6},
  number       = {1},
  pages        = {4--22},
  year         = {1985},
  doi          = {10.1016/0196-8858(85)90002-8},
  url          = {https://doi.org/10.1016/0196-8858(85)90002-8}
}

@article{auer2002finite,
  title        = {Finite-time analysis of the multiarmed bandit problem},
  author       = {Auer, Peter and Cesa-Bianchi, Nicolo and Fischer, Paul},
  journal      = {Machine Learning},
  volume       = {47},
  number       = {2-3},
  pages        = {235--256},
  year         = {2002},
  doi          = {10.1023/A:1013689704352},
  url          = {https://doi.org/10.1023/A:1013689704352}
}

@inproceedings{audibert2007tuning,
  title        = {Tuning bandit algorithms in stochastic environments},
  author       = {Audibert, Jean-Yves and Munos, R{\'e}mi and Szepesv{\'a}ri, Csaba},
  booktitle    = {International Conference on Algorithmic Learning Theory (ALT)},
  series       = {Lecture Notes in Computer Science},
  volume       = {4754},
  pages        = {150--165},
  year         = {2007},
  organization = {Springer},
  doi          = {10.1007/978-3-540-75225-7_15},
  url          = {https://doi.org/10.1007/978-3-540-75225-7_15}
}

@article{bubeck2012regret,
  title        = {Regret analysis of stochastic and nonstochastic multi-armed bandit problems},
  author       = {Bubeck, S{\'e}bastien and Cesa-Bianchi, Nicolo},
  journal      = {Foundations and Trends{\textregistered} in Machine Learning},
  volume       = {5},
  number       = {1},
  pages        = {1--122},
  year         = {2012},
  doi          = {10.1561/2200000024},
  url          = {https://doi.org/10.1561/2200000024}
}

@book{lattimore2020bandit,
  title        = {Bandit Algorithms},
  author       = {Lattimore, Tor and Szepesv{\'a}ri, Csaba},
  publisher    = {Cambridge University Press},
  year         = {2020},
  doi          = {10.1017/9781108571401},
  url          = {https://doi.org/10.1017/9781108571401}
}

@article{gittins1979bandit,
  title        = {Bandit processes and dynamic allocation indices},
  author       = {Gittins, John C.},
  journal      = {Journal of the Royal Statistical Society: Series B (Methodological)},
  volume       = {41},
  number       = {2},
  pages        = {148--164},
  year         = {1979},
  doi          = {10.1111/j.2517-6161.1979.tb01068.x},
  url          = {https://doi.org/10.1111/j.2517-6161.1979.tb01068.x}
}

@article{kaufmann2018bayesian,
  title        = {On {B}ayesian index policies for sequential resource allocation},
  author       = {Kaufmann, Emilie},
  journal      = {The Annals of Statistics},
  volume       = {46},
  number       = {2},
  pages        = {842--865},
  year         = {2018},
  doi          = {10.1214/17-AOS1569},
  url          = {https://doi.org/10.1214/17-AOS1569}
}

@inproceedings{abbasi2011oful,
  title        = {Improved algorithms for linear stochastic bandits},
  author       = {Abbasi-Yadkori, Yasin and P{\'a}l, D{\'a}vid and Szepesv{\'a}ri, Csaba},
  booktitle    = {Advances in Neural Information Processing Systems (NeurIPS)},
  pages        = {2312--2320},
  year         = {2011},
  url          = {https://proceedings.neurips.cc/paper/2011/hash/e1d5be1c7f2f456670de3d53c7b54f4a-Abstract.html}
}

@inproceedings{russo2014learning,
  title        = {Learning to optimize via information-directed sampling},
  author       = {Russo, Daniel and Van Roy, Benjamin},
  booktitle    = {Advances in Neural Information Processing Systems (NeurIPS)},
  pages        = {1583--1591},
  year         = {2014},
  url          = {https://papers.nips.cc/paper/5463-learning-to-optimize-via-information-directed-sampling}
}

@article{russo2014posterior,
  title        = {Learning to optimize via posterior sampling},
  author       = {Russo, Daniel and Van Roy, Benjamin},
  journal      = {Mathematics of Operations Research},
  volume       = {39},
  number       = {4},
  pages        = {1221--1243},
  year         = {2014},
  doi          = {10.1287/moor.2014.0650},
  url          = {https://doi.org/10.1287/moor.2014.0650}
}

@article{russo2016info,
  title        = {An information-theoretic analysis of {T}hompson sampling},
  author       = {Russo, Daniel and Van Roy, Benjamin},
  journal      = {Journal of Machine Learning Research},
  volume       = {17},
  number       = {68},
  pages        = {1--30},
  year         = {2016},
  url          = {https://jmlr.org/papers/v17/14-087.html}
}

@misc{russo2018satisficing,
  title        = {Satisficing in time-sensitive bandit learning},
  author       = {Russo, Daniel and Van Roy, Benjamin},
  year         = {2018},
  eprint       = {1803.02855},
  archivePrefix= {arXiv},
  primaryClass = {stat.ML},
  url          = {https://arxiv.org/abs/1803.02855}
}

@inproceedings{cesabianchi2017boltzmann,
  title        = {Boltzmann Exploration Done Right},
  author       = {Cesa-Bianchi, Nicol{\`o} and Gentile, Claudio and Lugosi, G{\'a}bor and Neu, Gergely},
  booktitle    = {Advances in Neural Information Processing Systems (NeurIPS)},
  year         = {2017},
  eprint       = {1705.10257},
  archivePrefix= {arXiv},
  primaryClass = {cs.LG},
  url          = {https://arxiv.org/abs/1705.10257}
}

@inproceedings{bayati2020greedy,
  title        = {Unreasonable Effectiveness of Greedy Algorithms in Multi-Armed Bandit with Many Arms},
  author       = {Bayati, Mohsen and Hamidi, Nima and Johari, Ramesh and Khosravi, Khashayar},
  booktitle    = {Advances in Neural Information Processing Systems (NeurIPS)},
  year         = {2020},
  eprint       = {2002.10121},
  archivePrefix= {arXiv},
  primaryClass = {cs.LG},
  url          = {https://proceedings.neurips.cc/paper/2020/hash/12d16adf4a9355513f9d574b76087a08-Abstract.html}
}

@article{berry1997infinitely,
  title        = {Bandit problems with infinitely many arms},
  author       = {Berry, Donald A. and Chen, Robert W. and Zame, Alan and Heath, David C. and Shepp, Larry A.},
  journal      = {The Annals of Statistics},
  volume       = {25},
  number       = {5},
  pages        = {2103--2116},
  year         = {1997},
  doi          = {10.1214/aos/1069362389},
  url          = {https://doi.org/10.1214/aos/1069362389}
}

@inproceedings{wang2009algorithms,
  title        = {Algorithms for infinitely many-armed bandits},
  author       = {Wang, Yizao and Audibert, Jean-Yves and Munos, R{\'e}mi},
  booktitle    = {Advances in Neural Information Processing Systems 21 (NeurIPS)},
  pages        = {1729--1736},
  year         = {2008},
  url          = {https://proceedings.neurips.cc/paper/2008/hash/49ae49a23f67c759bf4fc791ba842aa2-Abstract.html}
}

@inproceedings{bonald2013twotarget,
  title        = {Two-target algorithms for infinite-armed bandits with {B}ernoulli rewards},
  author       = {Bonald, Thomas and Prouti{\`e}re, Alexandre},
  booktitle    = {Advances in Neural Information Processing Systems (NeurIPS)},
  pages        = {2184--2192},
  year         = {2013},
  url          = {https://proceedings.neurips.cc/paper/2013/hash/fc2c7c47b918d0c2d792a719dfb602ef-Abstract.html}
}

@inproceedings{carpentier2015simple,
  title        = {Simple regret for infinitely many armed bandits},
  author       = {Carpentier, Alexandra and Valko, Michal},
  booktitle    = {International Conference on Machine Learning (ICML)},
  series       = {Proceedings of Machine Learning Research},
  volume       = {37},
  pages        = {1133--1141},
  year         = {2015},
  publisher    = {PMLR},
  url          = {https://proceedings.mlr.press/v37/carpentier15.html}
}

@inproceedings{chaudhuri2018quantile,
  title        = {Quantile-regret minimisation in infinitely many-armed bandits},
  author       = {Chaudhuri, Arghya Roy and Kalyanakrishnan, Shivaram},
  booktitle    = {Conference on Uncertainty in Artificial Intelligence (UAI)},
  pages        = {425--434},
  year         = {2018},
  url          = {http://auai.org/uai2018/proceedings/papers/169.pdf}
}

@article{bastani2020mostly,
  title        = {Mostly Exploration-Free Algorithms for Contextual Bandits},
  author       = {Bastani, Hamsa and Bayati, Mohsen and Khosravi, Khashayar},
  journal      = {Management Science},
  volume       = {67},
  number       = {3},
  pages        = {1329--1349},
  year         = {2020},
  doi          = {10.1287/mnsc.2020.3605},
  url          = {https://doi.org/10.1287/mnsc.2020.3605}
}

@inproceedings{kannan2018smoothed,
  title        = {A smoothed analysis of the greedy algorithm for the linear contextual bandit problem},
  author       = {Kannan, Sampath and Morgenstern, Jamie H. and Roth, Aaron and Waggoner, Bo and Wu, Zhiwei Steven},
  booktitle    = {Advances in Neural Information Processing Systems (NeurIPS)},
  pages        = {2227--2236},
  year         = {2018},
  url          = {https://proceedings.neurips.cc/paper/2018/hash/2cfd4560539f887a5e420412b370b361-Abstract.html}
}

@inproceedings{hao2020adaptive,
  title        = {Adaptive exploration in linear contextual bandit},
  author       = {Hao, Botao and Lattimore, Tor and Szepesv{\'a}ri, Csaba},
  booktitle    = {International Conference on Artificial Intelligence and Statistics (AISTATS)},
  series       = {Proceedings of Machine Learning Research},
  volume       = {108},
  pages        = {3536--3545},
  year         = {2020},
  organization = {PMLR},
  url          = {https://proceedings.mlr.press/v108/hao20b.html}
}

@inproceedings{raghavan2018externalities,
  title        = {The externalities of exploration and how data diversity helps exploitation},
  author       = {Raghavan, Manish and Slivkins, Aleksandrs and Vaughan, Jennifer Wortman and Wu, Zhiwei Steven},
  booktitle    = {Conference on Learning Theory (COLT)},
  series       = {Proceedings of Machine Learning Research},
  volume       = {75},
  pages        = {1724--1738},
  year         = {2018},
  publisher    = {PMLR},
  eprint       = {1806.00543},
  archivePrefix= {arXiv},
  primaryClass = {cs.LG},
  url          = {https://proceedings.mlr.press/v75/raghavan18a.html}
}

@misc{shao2024deepseekmath,
  title        = {DeepSeekMath: Pushing the Limits of Mathematical Reasoning in Open Language Models},
  author       = {Shao, Zhihong and Wang, Peiyi and Zhu, Qihao and Xu, Runxin and Song, Junxiao and Bi, Xiao and Zhang, Haowei and Zhang, Mingchuan and Li, Y.K. and Wu, Y. and Guo, Daya},
  year         = {2024},
  eprint       = {2402.03300},
  archivePrefix= {arXiv},
  primaryClass = {cs.CL},
  url          = {https://arxiv.org/abs/2402.03300}
}

@misc{deepseekai2025r1,
  title        = {DeepSeek-R1: Incentivizing Reasoning Capability in {LLM}s via Reinforcement Learning},
  author       = {{DeepSeek-AI}},
  year         = {2025},
  eprint       = {2501.12948},
  archivePrefix= {arXiv},
  primaryClass = {cs.CL},
  url          = {https://arxiv.org/abs/2501.12948},
  note         = {Published in Nature 645, 633--638 (2025), doi:10.1038/s41586-025-09422-z}
}

@misc{mroueh2025grpo,
  title        = {Reinforcement Learning with Verifiable Rewards: {GRPO}'s Effective Loss, Dynamics, and Success Amplification},
  author       = {Mroueh, Youssef},
  year         = {2025},
  eprint       = {2503.06639},
  archivePrefix= {arXiv},
  primaryClass = {cs.LG},
  url          = {https://arxiv.org/abs/2503.06639}
}

@article{yue2025rlvr,
  title        = {Does Reinforcement Learning Really Incentivize Reasoning Capacity in {LLM}s Beyond the Base Model?},
  author       = {Yue, Yang and Chen, Zhiqi and Lu, Rui and Zhao, Andrew and Wang, Zhaokai and Song, Shiji and Huang, Gao},
  year         = {2025},
  eprint       = {2504.13837},
  archivePrefix= {arXiv},
  primaryClass = {cs.AI},
  url          = {https://arxiv.org/abs/2504.13837}
}

@misc{wen2025implicit,
  title        = {Reinforcement Learning with Verifiable Rewards Implicitly Incentivizes Correct Reasoning in Base {LLM}s},
  author       = {Wen, Xumeng and Liu, Zihan and Zheng, Shun and Ye, Shengyu and Wu, Zhirong and Wang, Yang and Xu, Zhijian and Liang, Xiao and Li, Junjie and Miao, Ziming and Bian, Jiang and Yang, Mao},
  year         = {2025},
  eprint       = {2506.14245},
  archivePrefix= {arXiv},
  primaryClass = {cs.AI},
  url          = {https://arxiv.org/abs/2506.14245}
}

@misc{liu2025prorl,
  title        = {ProRL: Prolonged Reinforcement Learning Expands Reasoning Boundaries in Large Language Models},
  author       = {Liu, Mingjie and Diao, Shizhe and Lu, Ximing and Hu, Jian and Dong, Xin and Choi, Yejin and Kautz, Jan and Dong, Yi},
  year         = {2025},
  eprint       = {2505.24864},
  archivePrefix= {arXiv},
  primaryClass = {cs.CL},
  url          = {https://arxiv.org/abs/2505.24864}
}

@misc{liu2025r1zero,
  title        = {Understanding {R}1-Zero-Like Training: A Critical Perspective},
  author       = {Liu, Zichen and Chen, Changyu and Li, Wenjun and Qi, Penghui and Pang, Tianyu and Du, Chao and Lee, Wee Sun and Lin, Min},
  year         = {2025},
  eprint       = {2503.20783},
  archivePrefix= {arXiv},
  primaryClass = {cs.LG},
  url          = {https://arxiv.org/abs/2503.20783}
}

@misc{yu2025dapo,
  title        = {{DAPO}: An Open-Source {LLM} Reinforcement Learning System at Scale},
  author       = {Yu, Qiying and Zhang, Zheng and Zhu, Ruofei and Yuan, Yufeng and Zuo, Xiaochen and Yue, Yu and Dai, Weinan and Fan, Tiantian and Liu, Gaohong and Liu, Lingjun and others},
  year         = {2025},
  eprint       = {2503.14476},
  archivePrefix= {arXiv},
  primaryClass = {cs.LG},
  url          = {https://arxiv.org/abs/2503.14476}
}

@misc{cui2025entropy,
  title        = {The Entropy Mechanism of Reinforcement Learning for Reasoning Language Models},
  author       = {Cui, Ganqu and Zhang, Yuchen and Chen, Jiacheng and Yuan, Lifan and Wang, Zhi and Zuo, Yuxin and Li, Haozhan and Fan, Yuchen and Chen, Huayu and Chen, Weize and Liu, Zhiyuan and Peng, Hao and Bai, Lei and Ouyang, Wanli and Cheng, Yu and Zhou, Bowen and Ding, Ning},
  year         = {2025},
  eprint       = {2505.22617},
  archivePrefix= {arXiv},
  primaryClass = {cs.CL},
  url          = {https://arxiv.org/abs/2505.22617}
}

@misc{zeng2025shrinkage,
  title        = {Shrinking the Variance: Shrinkage Baselines for Reinforcement Learning with Verifiable Rewards},
  author       = {Zeng, Guanning and Zhou, Zhaoyi and Arora, Daman and Zanette, Andrea},
  year         = {2025},
  eprint       = {2511.03710},
  archivePrefix= {arXiv},
  primaryClass = {cs.LG},
  url          = {https://arxiv.org/abs/2511.03710}
}

@misc{peng2025simko,
  title        = {SimKO: Simple Pass@{K} Policy Optimization},
  author       = {Peng, Ruotian and Ren, Yi and Yu, Zhouliang and Liu, Weiyang and Wen, Yandong},
  year         = {2025},
  eprint       = {2510.14807},
  archivePrefix= {arXiv},
  primaryClass = {cs.AI},
  url          = {https://arxiv.org/abs/2510.14807}
}

@misc{yu2025passk,
  title        = {Pass@{k} Metric for {RLVR}: A Diagnostic Tool of Exploration, But Not an Objective},
  author       = {Yu, Yang},
  year         = {2025},
  eprint       = {2511.16231},
  archivePrefix= {arXiv},
  primaryClass = {cs.LG},
  url          = {https://arxiv.org/abs/2511.16231}
}

@misc{chen2021evaluating,
  title        = {Evaluating Large Language Models Trained on Code},
  author       = {Chen, Mark and Tworek, Jerry and Jun, Heewoo and Yuan, Qiming and Pinto, Henrique Ponde de Oliveira and Kaplan, Jared and Edwards, Harri and Burda, Yuri and Joseph, Nicholas and Brockman, Greg and others},
  year         = {2021},
  eprint       = {2107.03374},
  archivePrefix= {arXiv},
  primaryClass = {cs.LG},
  url          = {https://arxiv.org/abs/2107.03374}
}

@misc{ouyang2022training,
  title        = {Training language models to follow instructions with human feedback},
  author       = {Ouyang, Long and Wu, Jeff and Jiang, Xu and Almeida, Diogo and Wainwright, Carroll L. and Mishkin, Pamela and Zhang, Chong and Agarwal, Sandhini and Slama, Katarina and Ray, Alex and Schulman, John and Hilton, Jacob and Kelton, Fraser and Miller, Luke and Simens, Maddie and Askell, Amanda and Welinder, Peter and Christiano, Paul and Leike, Jan and Lowe, Ryan},
  year         = {2022},
  eprint       = {2203.02155},
  archivePrefix= {arXiv},
  primaryClass = {cs.CL},
  url          = {https://arxiv.org/abs/2203.02155},
  note         = {Advances in Neural Information Processing Systems 35 (NeurIPS 2022)}
}

@misc{schulman2017ppo,
  title        = {Proximal Policy Optimization Algorithms},
  author       = {Schulman, John and Wolski, Filip and Dhariwal, Prafulla and Radford, Alec and Klimov, Oleg},
  year         = {2017},
  eprint       = {1707.06347},
  archivePrefix= {arXiv},
  primaryClass = {cs.LG},
  url          = {https://arxiv.org/abs/1707.06347}
}

@inproceedings{schulman2015trpo,
  title        = {Trust Region Policy Optimization},
  author       = {Schulman, John and Levine, Sergey and Moritz, Philipp and Jordan, Michael I. and Abbeel, Pieter},
  booktitle    = {International Conference on Machine Learning (ICML)},
  series       = {Proceedings of Machine Learning Research},
  volume       = {37},
  pages        = {1889--1897},
  year         = {2015},
  publisher    = {PMLR},
  url          = {https://proceedings.mlr.press/v37/schulman15.html}
}

@misc{zhao2025sharp,
  title        = {Sharp Analysis for {KL}-Regularized Contextual Bandits and {RLHF}},
  author       = {Zhao, Heyang and Ye, Chenlu and Gu, Quanquan and Zhang, Tong},
  year         = {2025},
  eprint       = {2411.04625},
  archivePrefix= {arXiv},
  primaryClass = {cs.LG},
  url          = {https://openreview.net/forum?id=TE63KPCXWt},
  note         = {NeurIPS 2025 poster; arXiv:2411.04625}
}

@misc{zhang2025survey,
  title        = {A Survey of Reinforcement Learning for Large Reasoning Models},
  author       = {Zhang, Kaiyan and Zuo, Yuxin and He, Bingxiang and Sun, Youbang and Liu, Runze and Jiang, Che and Fan, Yuchen and Tian, Kai and Jia, Guoli and Li, Pengfei and Fu, Yu and Lv, Xingtai and Zhang, Yuchen and Zeng, Sihang and Qu, Shang and Li, Haozhan and Wang, Shijie and Wang, Yuru and Long, Xinwei and Liu, Fangfu and Xu, Xiang and Ma, Jiaze and Zhu, Xuekai and Hua, Ermo and Liu, Yihao and Li, Zonglin and Chen, Huayu and Qu, Xiaoye and Li, Yafu and Chen, Weize and Yuan, Zhenzhao and Gao, Junqi and Li, Dong and Ma, Zhiyuan and Cui, Ganqu and Liu, Zhiyuan and Qi, Biqing and Ding, Ning and Zhou, Bowen},
  year         = {2025},
  eprint       = {2509.08827},
  archivePrefix= {arXiv},
  primaryClass = {cs.CL},
  url          = {https://arxiv.org/abs/2509.08827}
}

@misc{stojanovski2025reasoninggym,
  title        = {Reasoning Gym: Reasoning Environments for Reinforcement Learning with Verifiable Rewards},
  author       = {Stojanovski, Zafir and Stanley, Oliver and Sharratt, Joe and Jones, Richard and Adefioye, Abdulhakeem and Kaddour, Jean and K{\"o}pf, Andreas},
  year         = {2025},
  eprint       = {2505.24760},
  archivePrefix= {arXiv},
  primaryClass = {cs.LG},
  url          = {https://arxiv.org/abs/2505.24760},
  note         = {NeurIPS 2025 Spotlight}
}

\appendix

\section{Proofs}
\label{sec:proofs}

The proof technique follows the many-armed analysis of \cite{bayati2020greedy}, adapted to control the softmax leakage probability. Even when the score vector is already informative, Boltzmann exploration still assigns some probability mass to inferior arms \citep{cesabianchi2017boltzmann}. Under a $1$-regular prior and many available arms, we show that the presence of many persistently good arms suppresses this leakage enough to recover the many-armed regret rate.

Throughout, $\mathcal{F}_t$ denotes the history $\sigma$-field generated by actions and observations up to time $t$.

\subsection{A Bernoulli ``never-crossing'' tail event}

\begin{definition}[Never-crossing probability $q_\theta(\mu)$]\label{def:qtheta}
Fix $\theta\in(0,1)$ and $\mu \in [0, 1]$.
Let $\{X_s\}_{s\ge 1}$ be i.i.d.\ Bernoulli$(\mu)$ and let $\widehat{\mu}(n):=\frac1n\sum_{s=1}^n X_s$.
Define
\[
q_\theta(\mu)\ :=\ \mathbb{P}\big(\widehat{\mu}(n)>\theta \ \text{for all } n\ge 1\big),
\]
with the convention $q_\theta(\mu) = 0$ for $\mu \le \theta$.
\end{definition}

\begin{lemma}[Bernoulli crossing bound {\citep[Lemma~4.2]{bayati2020greedy}}]\label{lem:bern-cross}
Let $X_s\sim\mathrm{Bernoulli}(\mu)$ i.i.d.\ and fix $\theta>2/3$.
If $\mu\ge (1+\theta)/2$, then
\[
q_\theta(\mu)\ \ge\ C_{\mathrm{Bern}}\ :=\ \frac{e^{-1/2}}{3}.
\]
\end{lemma}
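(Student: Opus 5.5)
The plan is to pass to a random walk and split the never-crossing event into a deterministic ``head start'' followed by a ruin-type estimate.

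\emph{Reduction to a random walk.} Write $\varepsilon:=1-\theta\in(0,1/3)$ (this is where $\theta>2/3$ is used) and $p:=1-\mu\le \varepsilon/2$. Define $Z_n:=\sum_{s=1}^n (X_s-\theta)$. Each success adds $+\varepsilon$ and each failure adds $-\theta$. Then $\{\widehat{\mu}(n)>\theta\ \forall n\}=\{Z_n>0\ \forall n\ge1\}$. The increments have drift $\mu-\theta\ge \varepsilon/2>0$. The difficulty is that $Z$ starts at $0$, so one early failure is fatal. I therefore split the event.

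\emph{Step 1 (head start).} Let $K:=\lceil 1/\varepsilon\rceil$ and let $E_1$ be the event that the first $K$ pulls are all successes. On $E_1$ we have $Z_n=n\varepsilon>0$ for $n\le K$ and $Z_K=K\varepsilon\ge 1$. Moreover
\[
\mathbb{P}(E_1)=\mu^K\ge (1-\varepsilon/2)^{1/\varepsilon+1}.
\]
An elementary expansion of $\log(1-\varepsilon/2)$ with $\varepsilon<1/3$ shows this is $e^{-1/2}$ up to a factor $1-O(\varepsilon)$. This factor is the source of the $e^{-1/2}$ in $C_{\mathrm{Bern}}$.

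\emph{Step 2 (no ruin from level $\ge 1$).} By the Markov property, conditionally on $E_1$ the process $(Z_{K+n})_{n\ge0}$ is a fresh walk started at $z\ge1$. I will show that
\[
V_n:=\exp(-\lambda Z_{K+n}),\qquad \lambda=1/2,
\]
is a nonnegative supermartingale. This amounts to checking
\[
\mu e^{-\varepsilon/2}+(1-\mu)e^{\theta/2}\le 1.
\]
The left side is increasing in $p=1-\mu$, so it suffices to take $p=\varepsilon/2$ and $\theta=1-\varepsilon$. There the expression is $1-\varepsilon\bigl(1-\tfrac12 e^{1/2}\bigr)+O(\varepsilon^2)$, and I would verify by a direct one-variable inequality that it stays $\le1$ on the whole range $\varepsilon\in(0,1/3)$.

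Crossing to $Z\le 0$ means $V\ge1$. By Ville's maximal inequality, the probability that the walk ever reaches $Z\le0$ is at most $V_0=e^{-z/2}\le e^{-1/2}\approx 0.607$. So, conditionally on $E_1$, the walk stays positive forever with probability at least $1-e^{-1/2}\approx 0.393$.

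\emph{Combination and main obstacle.} Multiplying the two steps gives
\[
q_\theta(\mu)\ \ge\ (1-e^{-1/2})\,(1-\varepsilon/2)^{1/\varepsilon+1}.
\]
It remains to check that this is at least $e^{-1/2}/3$ uniformly for $\varepsilon<1/3$. The slack $0.393$ versus $1/3$ must absorb the loss from rounding $K$ up to $\lceil 1/\varepsilon\rceil$ and from the $O(\varepsilon^2)$ terms in the logarithm.

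I expect the main obstacle to be the uniform numerical verification, both of the supermartingale inequality and of this final product, near the endpoint $\varepsilon\to1/3$. There the small-$\varepsilon$ expansions are weakest, and $K$ is small, so the ceiling matters most.

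If the constants do not close, I would try two adjustments first. One is to choose $K=\lceil c/\varepsilon\rceil$ for a tuned $c$, trading a larger starting level against a smaller head-start probability. The other is to take the largest admissible $\lambda$, i.e.\ the root of $\mu e^{-\lambda\varepsilon}+(1-\mu)e^{\lambda\theta}=1$, instead of the fixed value $1/2$.
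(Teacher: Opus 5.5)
The paper does not prove this lemma; it quotes it from Bayati et al. Your argument therefore has to stand on its own. Its architecture is sound: the reduction to $Z_n$, the head start on $E_1$, the monotonicity in $p$, and the use of Ville's inequality are all correct. Your worry about the supermartingale condition is also unfounded. For $\lambda=1/2$ the left side equals $e^{-\varepsilon/2}\bigl(1+p(e^{1/2}-1)\bigr)\le \exp\bigl(-\tfrac{2-e^{1/2}}{2}\,\varepsilon\bigr)<1$ for every $\varepsilon>0$, using $p\le\varepsilon/2$ and $1+x\le e^x$.

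The gap is the final step: the inequality you leave to check is false. As $\varepsilon\uparrow 1/3$ you have $(1-\varepsilon/2)^{1/\varepsilon+1}\to(5/6)^4\approx 0.482$. Your bound therefore tends to $(1-e^{-1/2})(5/6)^4\approx 0.190$, which is below $C_{\mathrm{Bern}}\approx 0.202$; it fails for roughly $\varepsilon\in(1/4,1/3)$. The head-start loss relative to $e^{-1/2}$ is first order in $\varepsilon$, not $O(\varepsilon^2)$: to first order it is $e^{-5\varepsilon/8}$, about $20\%$ at $\varepsilon=1/3$. The slack $3(1-e^{-1/2})\approx 1.18$ absorbs only about $15\%$. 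The bound loses because it pairs two worst cases that cannot occur together: when $K$ is close to $1/\varepsilon+1$, the starting level $K\varepsilon$ is close to $1+\varepsilon$, not $1$.

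Your first fallback does not rescue $\lambda=1/2$ with these crude bounds; near $\varepsilon=1/3$, $\max_c(1-e^{-c/2})(5/6)^{3c+1}\approx 0.196$. Your second fallback does work, and you need not solve for the root. The choice $\lambda=1$ is admissible uniformly, because $\mu e^{-\varepsilon}+p\,e^{1-\varepsilon}=e^{-\varepsilon}\bigl(1+p(e-1)\bigr)\le e^{-\varepsilon}\bigl(1+\tfrac{e-1}{2}\varepsilon\bigr)\le e^{-(3-e)\varepsilon/2}<1$. Ville then bounds the ruin probability from level $K\varepsilon\ge 1$ by $e^{-1}$. Since $(1-\varepsilon/2)^{1/\varepsilon+1}$ is decreasing in $\varepsilon$, you get $q_\theta(\mu)\ge(1-e^{-1})(5/6)^4\approx 0.305>C_{\mathrm{Bern}}$.

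Alternatively, you can keep $\lambda=1/2$ and use the true level $K\varepsilon$ instead of $1$. This also closes, with minimum about $0.23$ at $\varepsilon=1/4$, after a short check over $K$.
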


\subsection{Many always-good arms under a $\beta$-regular tail}
We now formalize the event that the prior produces many ``always-good'' arms, which will be used to control softmax leakage.

Fix $\delta\in(0,1/8]$ and define two thresholds
\[
\theta := 1-2\delta,
\qquad
\theta' := 1-3\delta.
\]
For each arm $i\in[m]$, define the \emph{always-good} event
\begin{equation}\label{eq:Gi-def}
\mathcal{G}_i(\delta)\ :=\ \Big\{\mu_i \ge 1-\delta\Big\}\ \cap\ \Big\{\widehat{\mu}_i(n)>\theta\ \ \forall n\ge 1\Big\},
\end{equation}
where $\widehat{\mu}_i(n)$ denotes the empirical mean of the \emph{first $n$ rewards} of arm $i$ (a property of the arm's reward sequence, independent of the policy).

Let
\[
M(\delta)\ :=\ \sum_{i=1}^m \mathbf{1}\{\mathcal{G}_i(\delta)\}
\]
be the number of always-good arms.

\begin{lemma}[Expected mass of always-good arms]\label{lem:pdelta-lower}
Assume $\Gamma$ is $1$-regular (Definition~\ref{def:beta-regular} with $\beta=1$).
There exists $\delta_0>0$ and a constant $c>0$ such that for all $\delta\in(0,\delta_0]$,
\[
p_\delta\ :=\ \mathbb{P}\big(\mathcal{G}_i(\delta)\big)
\ =\ \mathbb{E}_{\mu\sim\Gamma}\big[\mathbf{1}\{\mu\ge 1-\delta\}\,q_{1-2\delta}(\mu)\big]
\ \ge\ c\,\delta.
\]
\end{lemma}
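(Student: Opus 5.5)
The plan is to lower-bound the integrand pointwise on the event $\{\mu\ge 1-\delta\}$ and then integrate against $\Gamma$ using the lower half of $1$-regularity.

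First, the identity $p_\delta=\mathbb{E}_{\mu\sim\Gamma}[\mathbf{1}\{\mu\ge 1-\delta\}\,q_{1-2\delta}(\mu)]$ follows by conditioning on $\mu_i$. Given $\mu_i=\mu$, the reward sequence $\{X_{i,s}\}_{s\ge1}$ is i.i.d.\ Bernoulli$(\mu)$. The event $\{\widehat{\mu}_i(n)>\theta\ \forall n\}$ depends only on this sequence and not on the policy, so its conditional probability is exactly $q_\theta(\mu)$ with $\theta=1-2\delta$ (Definition~\ref{def:qtheta}). The tower property then gives the displayed expectation.

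Next, I apply Lemma~\ref{lem:bern-cross} with $\theta=1-2\delta$. Its hypothesis $\theta>2/3$ holds whenever $\delta<1/6$, which is guaranteed by taking $\delta_0\le 1/8$. Its other hypothesis is $\mu\ge(1+\theta)/2=1-\delta$, which is exactly the indicator condition. So on $\{\mu\ge 1-\delta\}$ we get $q_{1-2\delta}(\mu)\ge C_{\mathrm{Bern}}=e^{-1/2}/3$. This gives
\[
p_\delta\ \ge\ C_{\mathrm{Bern}}\,\Gamma([1-\delta,1]).
\]

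Finally, choose $\delta_0:=\min\{1/8,\varepsilon_0\}$. Then for $\delta\le\delta_0$, $1$-regularity (Definition~\ref{def:beta-regular}) gives $\Gamma([1-\delta,1])\ge c_0\delta$. Hence $p_\delta\ge c\,\delta$ with $c:=c_0e^{-1/2}/3$.

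There is no substantive obstacle. The only points needing care are these:
\begin{itemize}
\item the boundary case $\mu=1-\delta$ is covered by the non-strict inequality in Lemma~\ref{lem:bern-cross};
\item the convention $q_\theta(\mu)=0$ for $\mu\le\theta$ does not conflict here, since $1-\delta>1-2\delta$;
\item the choice of $\delta_0$ must be compatible with the $\delta_0\le\min\{1/8,\varepsilon_0/8\}$ used in Theorem~\ref{thm:asg-beta1}, and it is, because that theorem's range is smaller.
\end{itemize}
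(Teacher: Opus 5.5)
Your proof is correct and follows the paper's argument: apply Lemma~\ref{lem:bern-cross} with $\theta=1-2\delta$ (valid for $\delta<1/6$) to get $q_{1-2\delta}(\mu)\ge C_{\mathrm{Bern}}$ on $\{\mu\ge 1-\delta\}$, then use the lower $1$-regularity bound to conclude $p_\delta\ge C_{\mathrm{Bern}}c_0\,\delta$. Two details the paper leaves implicit are supplied by you: the conditioning argument that justifies the identity for $p_\delta$, and the explicit choice $\delta_0=\min\{1/8,\varepsilon_0\}$.
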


\begin{proof}
By Lemma~\ref{lem:bern-cross}, for $\delta<1/6$ and $\mu\ge 1-\delta$, we have
$q_{1-2\delta}(\mu)\ge C_{\mathrm{Bern}}$ since $1-\delta\ge (1+(1-2\delta))/2$.
Therefore
\[
p_\delta \ \ge\ C_{\mathrm{Bern}}\cdot \mathbb{P}(\mu\ge 1-\delta).
\]
By $1$-regularity, $\mathbb{P}(\mu\ge 1-\delta)\ge c_0\,\delta$ for all sufficiently small $\delta$, hence $p_\delta\ge C_{\mathrm{Bern}}c_0\,\delta$.
\end{proof}

\begin{lemma}[Many always-good arms with high probability]\label{lem:many-good-arms}
Under the assumptions of Lemma~\ref{lem:pdelta-lower}, there exist constants $c,c'>0$ such that for all sufficiently small $\delta$,
\[
\mathbb{P}\Big(M(\delta) \ \ge\ r(\delta)\Big) \ \ge\ 1 - \exp(-c'\,m\,\delta),
\qquad
r(\delta)\ :=\ \max\!\left\{1,\left\lfloor \frac{m p_\delta}{2}\right\rfloor\right\}.
\]
\end{lemma}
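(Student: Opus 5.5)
The plan is to write $M(\delta)$ as a sum of i.i.d.\ Bernoulli$(p_\delta)$ indicators and apply a multiplicative Chernoff lower-tail bound, then use Lemma~\ref{lem:pdelta-lower} to turn the exponent $mp_\delta$ into $m\delta$.

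\emph{Independence.} The event $\mathcal{G}_i(\delta)$ in \eqref{eq:Gi-def} depends only on $\mu_i$ and on arm $i$'s reward sequence $\{X_{i,s}\}_{s\ge1}$, not on the policy. The pairs $(\mu_i,\{X_{i,s}\}_s)$ are i.i.d.\ across $i$: the means are i.i.d.\ from $\Gamma$, and conditional on them the reward sequences are independent. Hence the indicators $\mathbf{1}\{\mathcal{G}_i(\delta)\}$ are i.i.d.\ Bernoulli$(p_\delta)$, and $M(\delta)\sim\mathrm{Bin}(m,p_\delta)$ with $\mathbb{E}M(\delta)=mp_\delta$.

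\emph{Chernoff bound.} The multiplicative lower-tail bound $\mathbb{P}(M\le(1-\epsilon)\mu)\le\exp(-\epsilon^2\mu/2)$ with $\epsilon=1/2$ gives
\[
\mathbb{P}\bigl(M(\delta)< mp_\delta/2\bigr)\le\exp(-mp_\delta/8).
\]
Since $M(\delta)$ is an integer, $M(\delta)\ge mp_\delta/2$ implies $M(\delta)\ge\lfloor mp_\delta/2\rfloor$. By Lemma~\ref{lem:pdelta-lower}, for $\delta\le\delta_0$ we have $p_\delta\ge c\delta$, so the failure probability is at most $\exp(-c m\delta/8)$. This would give the claim with $c'=c/8$, except for the $\max\{1,\cdot\}$ in $r(\delta)$.

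\emph{The case $r(\delta)=1$.} This is the only real subtlety. When $\lfloor mp_\delta/2\rfloor=0$, the event $M(\delta)\ge r(\delta)$ becomes $M(\delta)\ge1$, whose failure probability is
\[
(1-p_\delta)^m\le e^{-mp_\delta}\le e^{-cm\delta}\le e^{-cm\delta/8}.
\]
If $\lfloor mp_\delta/2\rfloor\ge1$, the Chernoff bound above covers the event directly. So in both cases
\[
\mathbb{P}\bigl(M(\delta)\ge r(\delta)\bigr)\ge 1-\exp(-c'm\delta)\quad\text{with } c'=c/8.
\]
I expect no step to be a genuine obstacle. The only points needing care are the policy-independence of $\mathcal{G}_i$, which gives the binomial law, and this $r(\delta)=1$ case split. "Sufficiently small $\delta$" just means $\delta\le\delta_0$ as in Lemma~\ref{lem:pdelta-lower}.
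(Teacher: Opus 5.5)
Your proposal is correct and follows the paper's proof essentially step for step. Both arguments get the binomial law from the i.i.d.\ arm-level events, apply the multiplicative Chernoff bound with $\epsilon=1/2$ when $r(\delta)=\lfloor mp_\delta/2\rfloor$, use $(1-p_\delta)^m\le e^{-mp_\delta}$ when $r(\delta)=1$ (your split on $\lfloor mp_\delta/2\rfloor\ge 1$ is equivalent to the paper's split on $mp_\delta\ge 2$), and finish with $p_\delta\ge c\delta$ from Lemma~\ref{lem:pdelta-lower}.
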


\begin{proof}
Across arms, the pairs $(\mu_i, (X_{i,s})_{s\ge 1})$ are i.i.d., so the indicators $\mathbf{1}\{\mathcal{G}_i(\delta)\}$ are i.i.d.\ Bernoulli with mean $p_\delta$.
Thus $M(\delta)\sim\mathrm{Binomial}(m,p_\delta)$.

If $m p_\delta\ge 2$, then $r(\delta)=\lfloor m p_\delta/2\rfloor$ and a multiplicative Chernoff bound gives
\[
\mathbb{P}\big(M(\delta) < r(\delta)\big)
\le
\mathbb{P}\big(M(\delta) < m p_\delta/2\big)
\le
\exp(- m p_\delta/8).
\]
If $m p_\delta<2$, then $r(\delta)=1$ and
\[
\mathbb{P}\big(M(\delta) < r(\delta)\big)
=
\mathbb{P}\big(M(\delta)=0\big)
=
(1-p_\delta)^m
\le
\exp(-m p_\delta).
\]
In either case,
\[
\mathbb{P}\big(M(\delta) < r(\delta)\big)\le \exp(-c\,m p_\delta).
\]
Lemma~\ref{lem:pdelta-lower} gives $p_\delta\ge c' \delta$, which yields the stated bound after adjusting constants.
\end{proof}

\subsection{A key inequality: softmax leakage controlled by $r$ good arms}
We next bound how often a suboptimal arm can be sampled when many good arms maintain a score margin.

\begin{lemma}[Pointwise leakage bound for softmax]\label{lem:softmax-pointwise}
Fix time $t$, scores $\{s_j\}_{j=1}^m\subset\mathbb{R}$, an inverse temperature $\eta_t \ge 0$, and an integer $r \in \{1, \dots, m\}$.
Let $p_t(i)\propto \exp(\eta_t s_i)$.
If there is a subset $\mathcal{J}\subset[m]$ with $|\mathcal{J}|=r$ such that $s_j\ge s^\star$ for all $j\in\mathcal{J}$, then for any $i$,
\[
p_t(i)\ \le\ \frac{\exp(\eta_t s_i)}{r\,\exp(\eta_t s^\star)}
\ =\ \frac{1}{r}\exp\big(-\eta_t (s^\star - s_i)\big).
\]
In particular, if $s^\star - s_i \ge \Delta>0$, then $p_t(i)\le \frac{1}{r}e^{-\eta_t \Delta}$.
\end{lemma}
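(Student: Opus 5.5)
The plan is to bound the partition function from below using only the $r$ good arms and then read off the ratio. Write $Z_t := \sum_{k=1}^m \exp(\eta_t s_k)$, so that $p_t(i) = \exp(\eta_t s_i)/Z_t$.

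Because every summand of $Z_t$ is strictly positive, we may drop all indices outside $\mathcal{J}$ and keep a lower bound:
\[
Z_t \ \ge\ \sum_{j\in\mathcal{J}} \exp(\eta_t s_j).
\]

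For each $j\in\mathcal{J}$ we have $s_j\ge s^\star$, and $\eta_t\ge 0$. Since $x\mapsto \exp(\eta_t x)$ is then nondecreasing, each term satisfies $\exp(\eta_t s_j)\ge \exp(\eta_t s^\star)$. With $|\mathcal{J}|=r$ this gives
\[
Z_t\ \ge\ r\,\exp(\eta_t s^\star).
\]
Note that the assumption $\eta_t\ge 0$ is exactly what is needed here. The bound does not require $i\notin\mathcal{J}$: if $i\in\mathcal{J}$, its own term simply appears in both the numerator and the lower bound, and the inequality remains valid.

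Substituting into $p_t(i)=\exp(\eta_t s_i)/Z_t$ yields
\[
p_t(i)\le \frac{\exp(\eta_t s_i)}{r\exp(\eta_t s^\star)} = \frac{1}{r}\exp\bigl(-\eta_t(s^\star-s_i)\bigr),
\]
which is the first claim.

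For the "in particular" clause, suppose $s^\star-s_i\ge\Delta>0$. Then $\eta_t(s^\star-s_i)\ge\eta_t\Delta$, again using $\eta_t\ge0$, and $\exp(-x)$ is decreasing in $x$, so $p_t(i)\le \frac1r e^{-\eta_t\Delta}$.

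There is no real obstacle in this argument. The only points that need care are the two monotonicity steps, each of which uses the sign condition $\eta_t\ge 0$, and checking that nothing requires $i$ to lie outside $\mathcal{J}$.
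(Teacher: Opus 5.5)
Your proof is correct and is exactly the paper's argument: you lower-bound the normalizer by the $r$ terms indexed by $\mathcal{J}$, each of which is at least $\exp(\eta_t s^\star)$ because $\eta_t\ge 0$, and then take the ratio. Your remarks that nothing requires $i\notin\mathcal{J}$ and that the ``in particular'' clause again uses $\eta_t\ge 0$ are both accurate.
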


\begin{proof}
Immediate from $\sum_{j=1}^m \exp(\eta_t s_j) \ge \sum_{j\in\mathcal{J}} \exp(\eta_t s_j)\ge r\exp(\eta_t s^\star)$.
\end{proof}

\subsection{Bounding pull counts of suboptimal arms: spikes + leakage}
We now use a simple ``spikes + leakage'' decomposition for empirical means.

Fix $\delta$ and thresholds $\theta=1-2\delta$, $\theta'=1-3\delta$ as above.
For a fixed arm $i$, let
\[
\widehat{\mu}_i(n)\ :=\ \frac{1}{n}\sum_{s=1}^n X_{i,s}
\]
be the empirical mean of the first $n$ rewards of arm $i$.

\begin{lemma}[Spikes of empirical means are exponentially rare]\label{lem:posterior-spikes}
For any arm mean $\mu<\theta'$ and any $n\ge 1$,
\[
\mathbb{P}\big(\widehat{\mu}_i(n)\ge \theta' \ \big| \ \mu_i=\mu\big)\ \le\ \exp\big(-2n(\theta'-\mu)^2\big).
\]
Consequently,
\[
\sum_{n=1}^\infty \mathbb{P}\big(\widehat{\mu}_i(n)\ge \theta' \ \big| \ \mu_i=\mu\big)\ \le\ \frac{1}{2(\theta'-\mu)^2}.
\]
\end{lemma}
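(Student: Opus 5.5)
The first inequality is a one-sided Hoeffding bound, and the second follows by summing the resulting geometric series. Conditional on $\mu_i=\mu$, the first $n$ rewards $X_{i,1},\dots,X_{i,n}$ are i.i.d.\ Bernoulli$(\mu)$ and take values in $[0,1]$. This holds because $\widehat{\mu}_i(n)$ is defined through the arm's own reward sequence, not through the policy's pull times, so no optional-stopping or adaptivity issue arises. Since $\mu<\theta'$, the event $\{\widehat{\mu}_i(n)\ge\theta'\}$ is a deviation of the sample mean above its expectation by $\theta'-\mu>0$. Hoeffding's inequality for bounded i.i.d.\ variables then gives
\[
\mathbb{P}\big(\widehat{\mu}_i(n)-\mu\ge \theta'-\mu \,\big|\, \mu_i=\mu\big)\ \le\ \exp\big(-2n(\theta'-\mu)^2\big).
\]
This is the first claim.

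For the summed bound, set $a:=2(\theta'-\mu)^2>0$. The $n$-th term is then at most $e^{-an}$, and summing over $n\ge1$ gives
\[
\sum_{n\ge 1} e^{-an}\ =\ \frac{1}{e^{a}-1}\ \le\ \frac{1}{a},
\]
where the last step uses $e^{a}-1\ge a$. The right-hand side equals $1/(2(\theta'-\mu)^2)$, which is the stated bound. One could alternatively compare the sum with $\int_0^\infty e^{-ax}\,dx=1/a$, but the elementary inequality $e^a\ge 1+a$ is already enough.

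Neither step is a real obstacle. The only point that needs care is the first one: conditioning on $\mu_i=\mu$ must leave the rewards i.i.d.\ Bernoulli$(\mu)$. This is guaranteed by the model, in which $X_{i,s}\mid\mu_i$ are i.i.d., together with the fact that $\widehat{\mu}_i(n)$ is indexed by the pull count $n$ rather than by calendar time. Because of this indexing, the bound applies later uniformly to every $n$, including the random pull counts generated by ASG.
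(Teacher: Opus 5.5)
Your proposal is correct and matches the paper's proof essentially line for line. Hoeffding's inequality for the Bernoulli average of the arm's first $n$ rewards gives the pointwise bound, and summing the geometric series $\sum_{n\ge 1} e^{-2n(\theta'-\mu)^2} = 1/(e^{2(\theta'-\mu)^2}-1)$ with $e^x-1\ge x$ gives the summed bound.
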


\begin{proof}
The first claim is Hoeffding's inequality for Bernoulli averages.
For the second, write $a=\theta'-\mu>0$ and sum the geometric bound:
\[
\sum_{n=1}^\infty e^{-2na^2}
=
\frac{e^{-2a^2}}{1-e^{-2a^2}}
=
\frac{1}{e^{2a^2}-1}
\le
\frac{1}{2a^2},
\]
using $e^x-1\ge x$ for $x\ge 0$.
\end{proof}

\paragraph{Good arms provide a uniform score floor.}
On the event $\mathcal{G}_i(\delta)$ in \eqref{eq:Gi-def}, we have $\widehat{\mu}_i(n)>\theta$ for every $n\ge 1$.
Since each arm is pulled once during initialization, it follows that for every such arm and every $t\ge m+1$,
\[
\widehat{\mu}_{i,t-1}
=
\widehat{\mu}_i\big(N_i(t-1)\big)
>
\theta.
\]
Thus, if $M(\delta)\ge r$, then throughout the softmax phase there are always at least $r$ arms whose current empirical means exceed $\theta$.
This is the key simplification in the empirical-mean proof: no maturation argument is needed.

\subsection{Pull-count bound conditional on having $r$ good arms}
We now state the pull-count lemma.

\begin{lemma}[Pull-count bound under $r$ reference arms]\label{lem:pull-count}
Fix an integer $r \in \{1, \dots, m\}$, fix $\delta$, and set thresholds $\theta=1-2\delta$, $\theta'=1-3\delta$.
For each $t\in\{m+1,\dots,T\}$, define the $\mathcal{F}_{t-1}$-measurable event
\[
\mathcal{E}_t
:=
\Big\{
\exists \mathcal{J}\subset[m]\ \text{with}\ |\mathcal{J}|=r
\ \text{such that}\ 
\widehat{\mu}_{j,t-1}\ge \theta\ \forall j\in\mathcal{J}
\Big\}.
\]
Then for any arm $i$ and any horizon $T$,
\begin{equation}\label{eq:pull-count-bound}
\mathbb{E}\!\left[N_i(T)\,\mathbf{1}\Big\{\bigcap_{t=m+1}^T \mathcal{E}_t\Big\}\right]
\ \le\
1
+
\sum_{n=1}^\infty \mathbb{P}\big(\widehat{\mu}_i(n)\ge \theta'\big)
+
\frac{1}{r}\sum_{t=m+1}^T \exp(-\eta_t\,(\theta-\theta')).
\end{equation}
\end{lemma}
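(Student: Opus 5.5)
Write $\mathcal{E} := \bigcap_{t=m+1}^T \mathcal{E}_t$ and decompose the pull count by which rule generated each pull. Arm $i$ is pulled once at initialization, which gives the leading $1$. For $t\ge m+1$, split each softmax-phase pull according to the arm's current empirical mean:
\[
N_i(T)\,\mathbf{1}\{\mathcal{E}\}\ \le\ 1+\sum_{t=m+1}^T \mathbf{1}\{A_t=i,\ \widehat{\mu}_{i,t-1}\ge\theta'\}+\sum_{t=m+1}^T \mathbf{1}\{A_t=i,\ \widehat{\mu}_{i,t-1}<\theta',\ \mathcal{E}_t\}.
\]
Here I used $\mathbf{1}\{\mathcal{E}\}\le\mathbf{1}\{\mathcal{E}_t\}$ in the last sum and dropped the indicator entirely in the middle sum. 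The middle sum is the spike term and the last sum is the leakage term.

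\emph{Spike term.} Reindex by the pull count. Whenever $A_t=i$ at a time $t\ge m+1$, the count $n:=N_i(t-1)\ge 1$ equals the number of previous pulls, and $\widehat{\mu}_{i,t-1}=\widehat{\mu}_i(n)$. Distinct such times $t$ give distinct values of $n$, since $N_i$ strictly increases at each pull of $i$. Hence
\[
\sum_{t}\mathbf{1}\{A_t=i,\widehat{\mu}_{i,t-1}\ge\theta'\}\ \le\ \sum_{n\ge1}\mathbf{1}\{\widehat{\mu}_i(n)\ge\theta'\}
\]
pathwise. This is the standard reward-table argument: $\widehat{\mu}_i(n)$ depends only on the fixed sequence $(X_{i,s})_s$ and not on the policy. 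Taking expectations gives $\sum_n\mathbb{P}(\widehat{\mu}_i(n)\ge\theta')$.

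\emph{Leakage term.} Condition on $\mathcal{F}_{t-1}$. Both $\mathcal{E}_t$ and $\{\widehat{\mu}_{i,t-1}<\theta'\}$ are $\mathcal{F}_{t-1}$-measurable, and $\mathbb{P}(A_t=i\mid\mathcal{F}_{t-1})=p_t(i)$. On this event, apply Lemma~\ref{lem:softmax-pointwise} with scores $s_j=\widehat{\mu}_{j,t-1}$, $s^\star=\theta$, and the set $\mathcal{J}$ provided by $\mathcal{E}_t$. If $i\in\mathcal{J}$, then $\widehat{\mu}_{i,t-1}\ge\theta>\theta'$, contradicting the event, so $i\notin\mathcal{J}$ automatically; in any case the lemma applies for any $i$. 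Since $s^\star-s_i>\theta-\theta'$, the lemma gives $p_t(i)\le \frac1r e^{-\eta_t(\theta-\theta')}$. By the tower property, the expected leakage sum is at most $\frac1r\sum_{t=m+1}^T e^{-\eta_t(\theta-\theta')}$. Adding the three pieces yields \eqref{eq:pull-count-bound}.

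The main obstacle is keeping the spike reindexing clean. It must be pathwise rather than in expectation: the times at which arm $i$ is pulled depend on its own rewards, so one cannot directly write $\mathbb{E}$ of a sum over times with $\mathbb{P}(\widehat{\mu}_i(n)\ge\theta')$. The injection from pull times to $n$ avoids this, because the right-hand side depends only on arm $i$'s intrinsic reward sequence. A second point is that dropping $\mathbf{1}\{\mathcal{E}\}$ from the spike term is harmless, since that term is nonnegative.
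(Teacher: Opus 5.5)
Your proof is correct and follows essentially the same route as the paper. The decomposition is the same: the initialization pull, then spike pulls, then leakage pulls. Your pathwise injection from pull times $t$ to pull counts $n=N_i(t-1)$ is the paper's indexing by the $(n+1)$-th pull time $\tau_{i,n+1}$, and you handle the leakage term identically, by conditioning on $\mathcal{F}_{t-1}$ and applying Lemma~\ref{lem:softmax-pointwise} with $s^\star=\theta$.
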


\begin{proof}
Let $\tau_{i,n}$ be the time of the $n$-th pull of arm $i$, with $\tau_{i,n}=\infty$ if the $n$-th pull never occurs.
If $\tau_{i,n+1}\le T$, then at time $\tau_{i,n+1}-1$ the arm has been observed exactly $n$ times, hence
\[
\widehat{\mu}_{i,\tau_{i,n+1}-1} = \widehat{\mu}_i(n).
\]
Therefore
\[
N_i(T)\,\mathbf{1}\Big\{\bigcap_{t=m+1}^T \mathcal{E}_t\Big\}
\le
1
+
\sum_{n=1}^\infty
\mathbf{1}\Big\{\tau_{i,n+1}\le T,\ \widehat{\mu}_i(n)\ge \theta'\Big\}
+
\sum_{t=m+1}^T
\mathbf{1}\Big\{A_t=i,\ \widehat{\mu}_{i,t-1}<\theta',\ \mathcal{E}_t\Big\}.
\]
Taking expectations, the first sum is bounded by
\[
\sum_{n=1}^\infty \mathbb{P}\big(\widehat{\mu}_i(n)\ge \theta'\big).
\]

For the second sum, note that $\mathcal{E}_t\in\mathcal{F}_{t-1}$.
Hence
\[
\mathbb{E}\big[\mathbf{1}\{A_t=i,\ \widehat{\mu}_{i,t-1}<\theta',\ \mathcal{E}_t\}\big]
=
\mathbb{E}\Big[
\mathbf{1}\{\widehat{\mu}_{i,t-1}<\theta',\ \mathcal{E}_t\}\,
\mathbb{P}(A_t=i\mid \mathcal{F}_{t-1})
\Big].
\]
On $\mathcal{E}_t\cap\{\widehat{\mu}_{i,t-1}<\theta'\}$, Lemma~\ref{lem:softmax-pointwise} gives
\[
\mathbb{P}(A_t=i\mid \mathcal{F}_{t-1})
\le
\frac{1}{r}\exp\big(-\eta_t(\theta-\theta')\big).
\]
Summing over $t$ yields \eqref{eq:pull-count-bound}.
\end{proof}

\subsection{A tail-moment lemma for integrating the spike bound}
To convert \eqref{eq:pull-count-bound} into Bayes regret, we need to integrate over $\mu$ the spike penalty.
This is exactly where $\beta$-regularity yields logarithmic (for $\beta=1$) or polynomial moment control.

\begin{lemma}[Tail-moment bound, general $\beta$]\label{lem:tail-moment-general}
Let $\Gamma$ be $\beta$-regular (Definition~\ref{def:beta-regular}) and let $\mu\sim\Gamma$.
Fix $\delta\in(0,\varepsilon_0/8]$ and set $U:=1-\mu$.
Then there is a constant $C<\infty$ (depending only on $\beta, c_0, C_0, \varepsilon_0$) such that
\[
\mathbb{E}\left[\mathbf{1}\{U\ge 4\delta\}\left(1+\frac{1}{U-3\delta}\right)\right]
\ \le\
C\cdot \mathfrak{M}_\beta(\delta),
\]
where
\[
\mathfrak{M}_\beta(\delta)
:=
\begin{cases}
1+\log(1/\delta), & \beta=1,\\
1, & \beta>1,\\
\delta^{-(1-\beta)}, & \beta\in(0,1).
\end{cases}
\]
\end{lemma}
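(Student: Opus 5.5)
We need to bound $\mathbb{E}[\mathbf{1}\{U\ge 4\delta\}(1+1/(U-3\delta))]$. On $\{U\ge 4\delta\}$ we have $U-3\delta\ge U/4$, so the integrand is at most $1+4/U$. It therefore suffices to show
\[
\mathbb{E}\big[\mathbf{1}\{U\ge 4\delta\}\,U^{-1}\big]\le C\,\mathfrak{M}_\beta(\delta),
\]
since the constant $1$ is absorbed because $\mathfrak{M}_\beta(\delta)\ge 1$ in all three cases (for $\beta<1$ we use $\delta\le 1$).

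We split the range of $U$ at $\varepsilon_0$. On $\{U\ge\varepsilon_0\}$ the integrand is at most $1/\varepsilon_0$, a constant. On $\{4\delta\le U<\varepsilon_0\}$ (empty unless $4\delta<\varepsilon_0$, which holds since $\delta\le\varepsilon_0/8$), we use the layer-cake formula with $F(u):=\mathbb{P}(U\le u)=\Gamma([1-u,1])\le C_0u^\beta$ for $u\le\varepsilon_0$. For $a=4\delta$ and $b=\varepsilon_0$,
\[
\mathbb{E}\big[\mathbf{1}\{a\le U<b\}U^{-1}\big]=\int_{[a,b)}u^{-1}\,dF(u)\le \frac{F(b)}{b}+\int_a^b \frac{F(u)}{u^2}\,du,
\]
which follows from integration by parts, discarding the negative boundary term $-F(a^-)/a$. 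Alternatively, one can avoid Stieltjes technicalities by writing $U^{-1}=\int_U^\infty s^{-2}\,ds$ and applying Fubini, which gives the same bound
\[
\int_a^\infty s^{-2}\,\mathbb{P}(a\le U< \min(s,b))\,ds\le \int_a^b s^{-2}F(s)\,ds+\frac{F(b)}{b}.
\]
The first term is at most $C_0\varepsilon_0^{\beta-1}$, a constant. The second is at most $C_0\int_{4\delta}^{\varepsilon_0}u^{\beta-2}\,du$.

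Evaluating this integral gives the three regimes. For $\beta=1$ it equals $\log(\varepsilon_0/(4\delta))\le \log(1/\delta)+O(1)$. For $\beta>1$ it is at most $\varepsilon_0^{\beta-1}/(\beta-1)=O(1)$. For $\beta\in(0,1)$ it is at most $(4\delta)^{\beta-1}/(1-\beta)=O(\delta^{-(1-\beta)})$. Combining the three pieces, with constants depending only on $\beta,C_0,\varepsilon_0$ (the lower constant $c_0$ is not needed), yields the claim.

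The only step needing care is the Fubini/layer-cake conversion, which must use only the upper tail bound and hold without assuming $\Gamma$ has a density. The Fubini form handles this directly, and I expect no real obstacle beyond this.
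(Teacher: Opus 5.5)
Your proof is correct and follows essentially the same route as the paper: a layer-cake (integration-by-parts) bound against the upper regularity estimate $F(u)\le C_0u^\beta$, split at $\varepsilon_0$, which reduces everything to $\int_{4\delta}^{\varepsilon_0}u^{\beta-2}\,du$. The differences are only technical. You apply $U-3\delta\ge U/4$ pointwise before integrating, whereas the paper uses it inside the integrated-by-parts term, and your Fubini representation $U^{-1}=\int_U^\infty s^{-2}\,ds$ avoids the Stieltjes endpoint and atom issue that the paper handles with a left-limit convention.
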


\begin{proof}
The proof is a straightforward integration-by-parts argument utilizing the $\beta$-regular bound $F(u):=\mathbb{P}(U\le u)\le C_0 u^\beta$ for $u \in (0, \varepsilon_0]$. Constants in the bounds below are allowed to depend on $(\beta, c_0, C_0, \varepsilon_0)$.

Writing
\[
\mathbb{E}\left[\frac{1}{U-3\delta}\mathbf{1}\{U\ge 4\delta\}\right]
=\int_{4\delta}^1 \frac{1}{u-3\delta}\,dF(u),
\]
integration by parts yields the boundary contribution
\[
\left[\frac{F(u)}{u-3\delta}\right]_{4\delta}^1 = \frac{F(1)}{1-3\delta} - \frac{F(4\delta)}{\delta}.
\]
Using the left-limit convention at the lower endpoint, the lower-boundary term is $-F((4\delta)^-)/\delta$, which is non-positive since the CDF $F$ is non-negative; dropping it preserves the upper bound (and automatically absorbs any atom $\Gamma$ may place at $u = 4\delta$). The boundary contribution is then at most $F(1)/(1-3\delta) = O(1)$. For the remaining integral, since $\delta \le \varepsilon_0/8$ implies $u - 3\delta \ge \varepsilon_0/2$ on $[\varepsilon_0, 1]$, the contribution from $[\varepsilon_0, 1]$ is $O(1)$ (with constants depending on $\varepsilon_0$, using $F(u) \le 1$). On $[4\delta, \varepsilon_0]$, the bound $u - 3\delta \ge u/4$ gives
\[
\int_{4\delta}^{\varepsilon_0} \frac{F(u)}{(u-3\delta)^2}\,du
\ \lesssim\
\int_{4\delta}^{\varepsilon_0} \frac{u^\beta}{u^2}\,du
=
\int_{4\delta}^{\varepsilon_0} u^{\beta-2}\,du,
\]
which scales as $O(\log(1/\delta))$ for $\beta=1$, $O(1)$ for $\beta>1$, and $\Theta(\delta^{-(1-\beta)})$ for $\beta\in(0,1)$. Adding the constant $1$ term from $\mathbb{E}[\mathbf{1}\{U \ge 4\delta\}] \le 1$ completes the proof.
\end{proof}

\subsection{Order statistics: size of the subtraction term}
The next lemma records the size of the subtraction term in \eqref{eq:proxy-ineq-bayes}. It is not needed for the upper bound in Theorem~\ref{thm:asg-beta1}, but it quantifies how much the exact identity can sharpen constants.

\begin{lemma}[Expected gap to $1$ of the best of $m$ arms]\label{lem:order-stat}
If $\Gamma$ is $\beta$-regular (Definition~\ref{def:beta-regular}) and $\mu_*=\max_{i\le m}\mu_i$, then
\[
\mathbb{E}[1-\mu_*]\ \le\ \frac{C}{m^{1/\beta}}
\]
for a constant $C$ depending only on $(\beta,c_0,C_0,\varepsilon_0)$.
In particular, for $\beta=1$, $\mathbb{E}[1-\mu_*]\le C/m$.
\end{lemma}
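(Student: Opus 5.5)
The statement to prove is Lemma~\ref{lem:order-stat}: $\mathbb{E}[1-\mu_*]\le C m^{-1/\beta}$. The plan is to write the expectation as a tail integral and use only the \emph{lower} regularity bound $\Gamma([1-\varepsilon,1])\ge c_0\varepsilon^\beta$.

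Set $U_*:=1-\mu_*=\min_i(1-\mu_i)$. Since $U_*\in[0,1]$,
\[
\mathbb{E}[U_*]=\int_0^1 \mathbb{P}(U_*>\varepsilon)\,d\varepsilon .
\]
By independence of the arm means,
\[
\mathbb{P}(U_*>\varepsilon)=\bigl(1-\Gamma([1-\varepsilon,1])\bigr)^m .
\]
I would split the integral at $\varepsilon_0$.

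\emph{Range $\varepsilon\in(0,\varepsilon_0]$.} Here $1$-$\beta$-regularity gives
\[
\mathbb{P}(U_*>\varepsilon)\le (1-c_0\varepsilon^\beta)^m\le \exp(-c_0 m\varepsilon^\beta).
\]
Substituting $v=(c_0m)^{1/\beta}\varepsilon$ yields
\[
\int_0^{\varepsilon_0}e^{-c_0m\varepsilon^\beta}\,d\varepsilon\ \le\ (c_0m)^{-1/\beta}\int_0^\infty e^{-v^\beta}\,dv=(c_0m)^{-1/\beta}\,\Gamma_{\mathrm{fn}}(1+1/\beta).
\]
Here $\Gamma_{\mathrm{fn}}$ denotes the Gamma function, written out to avoid a clash with the prior $\Gamma$. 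This piece is $O(m^{-1/\beta})$.

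\emph{Range $\varepsilon\in(\varepsilon_0,1]$.} Monotonicity gives $\Gamma([1-\varepsilon,1])\ge \Gamma([1-\varepsilon_0,1])\ge c_0\varepsilon_0^\beta$. So the integrand is at most $\exp(-c_0\varepsilon_0^\beta m)$, and this piece contributes at most $\exp(-c_0\varepsilon_0^\beta m)$. Because $\sup_{m\ge1} m^{1/\beta}e^{-am}<\infty$ for $a=c_0\varepsilon_0^\beta>0$, this exponential term is also bounded by $C' m^{-1/\beta}$, with $C'$ depending only on $(\beta,c_0,\varepsilon_0)$.

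Adding the two pieces gives the claim, and $\beta=1$ gives $C/m$. There is no real obstacle here. The only care needed is the bookkeeping that absorbs the exponentially small term into $m^{-1/\beta}$, so that the constant depends only on the regularity parameters. The upper constant $C_0$ is not used at all.
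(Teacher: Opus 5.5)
Your proposal is correct and follows the paper's proof essentially step for step. Both write $\mathbb{E}[1-\mu_*]$ as a tail integral, use only the lower regularity bound to get $\exp(-c_0 m u^\beta)$ on $(0,\varepsilon_0]$ with the substitution $v\propto m^{1/\beta}u$, and are left with a remainder of order $\exp(-c_0\varepsilon_0^\beta m)$ on $(\varepsilon_0,1]$. Your explicit absorption of that remainder into $C'm^{-1/\beta}$ fills in a step the paper leaves implicit.
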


\begin{proof}
Let $U_i:=1-\mu_i$ and $U_*:=\min_{i\le m}U_i = 1-\mu_*$.
By $\beta$-regularity, for small $u$ we have $\mathbb{P}(U_i\le u)\ge c_0 u^\beta$, hence
\[
\mathbb{P}(U_*>u)
=
\mathbb{P}(U_1>u)^m
\le
(1-c_0 u^\beta)^m
\le
\exp(-c_0 m u^\beta)
\quad (u\le \varepsilon_0).
\]
Integrating,
\[
\mathbb{E}[U_*]
=
\int_0^\infty \mathbb{P}(U_*>u)\,du
\le
\int_0^{\varepsilon_0} e^{-c_0 m u^\beta}\,du + (1-\varepsilon_0)\mathbb{P}(U_*>\varepsilon_0)
\le
C m^{-1/\beta},
\]
using the change of variables $v=c_0 m u^\beta$.
\end{proof}

\subsection{Proof of Theorem~\ref{thm:asg-beta1}}
\begin{proof}
Let
\[
\theta := 1-2\delta,
\qquad
\theta' := 1-3\delta,
\qquad
p_\delta := \mathbb{P}\big(\mathcal{G}_i(\delta)\big),
\qquad
r := \max\!\left\{1,\left\lfloor \frac{m p_\delta}{2}\right\rfloor\right\}.
\]
Define the event
\[
\mathcal{E} := \big\{M(\delta)\ge r\big\}.
\]
By Lemma~\ref{lem:many-good-arms},
\[
\mathbb{P}(\mathcal{E}^c)\ \le\ \exp(-c\,m\,\delta).
\]

For each $t\in\{m+1,\dots,T\}$ let $\mathcal{E}_t$ be the event from Lemma~\ref{lem:pull-count}.
If $\mathcal{E}$ occurs, then there are at least $r$ arms $j$ such that $\mathcal{G}_j(\delta)$ holds.
For each such arm and each $t\ge m+1$,
\[
\widehat{\mu}_{j,t-1}
=
\widehat{\mu}_j\big(N_j(t-1)\big)
>
\theta,
\]
so $\mathcal{E}\subseteq \bigcap_{t=m+1}^T \mathcal{E}_t$.
Hence Lemma~\ref{lem:pull-count} implies that for every arm $i$,
\begin{equation}\label{eq:thm-pull-count}
\mathbb{E}\big[N_i(T)\mathbf{1}\{\mathcal{E}\}\big]
\ \le\
1 + \sum_{n=1}^\infty \mathbb{P}\big(\widehat{\mu}_i(n)\ge \theta'\big)
+ \frac{1}{r}\sum_{t=m+1}^T e^{-\eta_t\delta}.
\end{equation}

Write $U_i:=1-\mu_i$.
Decompose the surrogate regret on $\mathcal{E}$ as
\[
\widetilde{R}_T\,\mathbf{1}\{\mathcal{E}\}
=
\sum_{i=1}^m U_i N_i(T)\mathbf{1}\{\mathcal{E}\}\mathbf{1}\{U_i<4\delta\}
+
\sum_{i=1}^m U_i N_i(T)\mathbf{1}\{\mathcal{E}\}\mathbf{1}\{U_i\ge 4\delta\}.
\]

For the near-optimal arms,
\[
\sum_{i=1}^m U_i N_i(T)\mathbf{1}\{\mathcal{E}\}\mathbf{1}\{U_i<4\delta\}
\le
4\delta \sum_{i=1}^m N_i(T)
=
4\delta T,
\]
hence
\begin{equation}\label{eq:near-optimal-part}
\mathbb{E}\Big[\sum_{i=1}^m U_i N_i(T)\mathbf{1}\{\mathcal{E}\}\mathbf{1}\{U_i<4\delta\}\Big]
\le
4T\delta.
\end{equation}

For the remaining arms, we redo the pathwise decomposition from the proof of Lemma~\ref{lem:pull-count}, this time keeping the factor $U_i\mathbf{1}\{U_i\ge 4\delta\}$. On $\mathcal{E}\subseteq\bigcap_{t=m+1}^T\mathcal{E}_t$, that proof gives the pathwise inequality
\[
N_i(T)\,\mathbf{1}\{\mathcal{E}\}
\ \le\
1 + \sum_{n=1}^\infty \mathbf{1}\{\widehat{\mu}_i(n)\ge \theta'\}
+ \sum_{t=m+1}^T \mathbf{1}\{A_t=i,\ \widehat{\mu}_{i,t-1}<\theta',\ \mathcal{E}_t\}.
\]
Multiply both sides by the nonnegative quantity $U_i\mathbf{1}\{U_i\ge 4\delta\}$, sum over $i$, and take expectation. The first two pathwise terms give the spike part on the right-hand side of \eqref{eq:suboptimal-decomp} below; these will be handled by conditioning on $\mu_i$. For the leakage part (the last sum), use the pathwise bound $U_i\mathbf{1}\{U_i\ge 4\delta\}\le 1$ \emph{before} taking expectation, and then repeat the conditioning step from the proof of Lemma~\ref{lem:pull-count}: on $\mathcal{E}_t\cap\{\widehat{\mu}_{i,t-1}<\theta'\}$, Lemma~\ref{lem:softmax-pointwise} bounds $\mathbb{P}(A_t=i\mid \mathcal{F}_{t-1}) \le \tfrac{1}{r}e^{-\eta_t\delta}$. The result is
\begin{align}
\mathbb{E}\Big[\sum_{i=1}^m U_i N_i(T)\mathbf{1}\{\mathcal{E}\}\mathbf{1}\{U_i\ge 4\delta\}\Big]
&\le
\sum_{i=1}^m
\mathbb{E}\Big[
\mathbf{1}\{U_i\ge 4\delta\}\,
U_i\Big(1+\sum_{n=1}^\infty \mathbf{1}\{\widehat{\mu}_i(n)\ge \theta'\}\Big)
\Big]
+
\frac{m}{r}\sum_{t=m+1}^T e^{-\eta_t\delta}.
\label{eq:suboptimal-decomp}
\end{align}

Now condition on $\mu_i$. Since the summands in the spike series are nonnegative, Tonelli's theorem justifies interchanging the expectation and the infinite sum.
If $U_i\ge 4\delta$, then $\mu_i\le 1-4\delta<\theta'$, so Lemma~\ref{lem:posterior-spikes} gives
\[
\sum_{n=1}^\infty
\mathbb{P}\big(\widehat{\mu}_i(n)\ge \theta' \mid \mu_i\big)
\le
\frac{1}{2(\theta'-\mu_i)^2}
=
\frac{1}{2(U_i-3\delta)^2}.
\]
Therefore
\[
\mathbb{E}\Big[
\mathbf{1}\{U_i\ge 4\delta\}\,
U_i\Big(1+\sum_{n=1}^\infty \mathbf{1}\{\widehat{\mu}_i(n)\ge \theta'\}\Big)
\Big]
=
\mathbb{E}\Big[
\mathbf{1}\{U_i\ge 4\delta\}\,
U_i\Big(1+\sum_{n=1}^\infty \mathbb{P}(\widehat{\mu}_i(n)\ge \theta'\mid \mu_i)\Big)
\Big]
\]
\[
\le
\mathbb{E}\Big[
\mathbf{1}\{U_i\ge 4\delta\}\,
U_i\Big(1+\frac{1}{2(U_i-3\delta)^2}\Big)
\Big].
\]
On $\{U_i\ge 4\delta\}$ we have $U_i-3\delta\ge \delta$ and $U_i\le 4(U_i-3\delta)$, hence
\[
U_i\Big(1+\frac{1}{2(U_i-3\delta)^2}\Big)
\le
4(U_i-3\delta)+\frac{2}{U_i-3\delta}
\le
6\Big(1+\frac{1}{U_i-3\delta}\Big).
\]
Applying Lemma~\ref{lem:tail-moment-general} with $\beta=1$ yields
\begin{equation}\label{eq:tail-moment-applied}
\mathbb{E}\Big[
\mathbf{1}\{U_i\ge 4\delta\}\,
U_i\Big(1+\sum_{n=1}^\infty \mathbf{1}\{\widehat{\mu}_i(n)\ge \theta'\}\Big)
\Big]
\le
C\bigl(1+\log(1/\delta)\bigr).
\end{equation}
Summing \eqref{eq:tail-moment-applied} over $i$ and combining with \eqref{eq:suboptimal-decomp} gives
\begin{equation}\label{eq:suboptimal-final}
\mathbb{E}\Big[\sum_{i=1}^m U_i N_i(T)\mathbf{1}\{\mathcal{E}\}\mathbf{1}\{U_i\ge 4\delta\}\Big]
\le
C\,m\bigl(1+\log(1/\delta)\bigr)
+
\frac{m}{r}\sum_{t=m+1}^T e^{-\eta_t\delta}.
\end{equation}

By Lemma~\ref{lem:pdelta-lower}, $p_\delta\ge c\delta$ for all sufficiently small $\delta$.
If $m p_\delta\ge 2$, then $r=\lfloor m p_\delta/2\rfloor$ and therefore
\[
\frac{m}{r}\le \frac{4}{p_\delta}\le \frac{C}{\delta}.
\]
If $m p_\delta<2$, then $r=1$ and
\[
m < \frac{2}{p_\delta}\le \frac{C}{\delta},
\]
so again $m/r\le C/\delta$.
Combining \eqref{eq:near-optimal-part} and \eqref{eq:suboptimal-final},
\[
\mathbb{E}\big[\widetilde{R}_T\,\mathbf{1}\{\mathcal{E}\}\big]
\le
C\left[
T\delta
+
m\bigl(1+\log(1/\delta)\bigr)
+
\frac{1}{\delta}\sum_{t=1}^T e^{-\eta_t\delta}
\right],
\]
where we enlarged the sum from $t=m+1$ to $t=1$.

Finally,
\[
\mathbb{E}[\widetilde{R}_T]
=
\mathbb{E}\big[\widetilde{R}_T\,\mathbf{1}\{\mathcal{E}\}\big]
+
\mathbb{E}\big[\widetilde{R}_T\,\mathbf{1}\{\mathcal{E}^c\}\big]
\le
\mathbb{E}\big[\widetilde{R}_T\,\mathbf{1}\{\mathcal{E}\}\big]
+
T\,\mathbb{P}(\mathcal{E}^c),
\]
so
\[
\mathbb{E}[\widetilde{R}_T]
\le
C\left[
m
+
T\delta
+
m\bigl(1+\log(1/\delta)\bigr)
+
\frac{1}{\delta}\sum_{t=1}^T e^{-\eta_t\delta}
+
T e^{-c m\delta}
\right].
\]
Using \eqref{eq:proxy-ineq-bayes}, we have
\[
\mathrm{BR}_{T,m}(\text{ASG})
\le
\mathbb{E}[\widetilde{R}_T],
\]
which proves \eqref{eq:asg-main-bound}.

For the specialization \eqref{eq:delta-choice-beta1}, set $\delta = \min\{\delta_0, A\log(T\vee 2)/m\}$ with $A > 1/c$, and $\eta_t = (c_\eta/\delta)\log(t\vee 2)$ with $c_\eta > 1$. Then $\sum_{t=1}^T e^{-\eta_t\delta} = O(1)$.

If $A\log(T\vee 2)/m \le \delta_0$, then $\delta = A\log(T\vee 2)/m$ and $T e^{-c m \delta} = T^{1-cA} \le 1$ since $cA > 1$. The other terms in \eqref{eq:asg-main-bound} are $m$, $T\delta = A\log(T\vee 2)\cdot T/m$, $m(1+\log(1/\delta)) = \tilde{O}(m)$, and $(1/\delta)\sum_t e^{-\eta_t\delta} = \tilde{O}(m)$, summing to $\tilde{O}(m + T/m)$.

If $A\log(T\vee 2)/m > \delta_0$ (i.e., $m < A\log(T\vee 2)/\delta_0$), the cap binds and $T/m > \delta_0 T/(A\log(T\vee 2)) = \tilde{\Omega}(T)$, so $\tilde{O}(m+T/m)$ already absorbs the trivial bound $\mathrm{BR}_{T,m}(\text{ASG}) \le T$.

In both cases, $\mathrm{BR}_{T,m}(\text{ASG}) = \tilde{O}(m + T/m)$.
\end{proof}

\section{Toward a Guarantee for Prior-ASG}
\label{app:prior-theory}

This appendix outlines how the proof of Theorem~\ref{thm:asg-beta1} could extend to Prior-ASG.

\paragraph{Setting.}
Each arm $i$ carries a Beta prior with mean $\nu_i$ and weight $\lambda_i \le \Lambda$, both known to the algorithm, with $(\mu_i, \nu_i, \lambda_i)$ i.i.d.\ across arms. Prior-ASG samples $A_t \sim \mathrm{softmax}(\eta_t\, \bar\mu_{i,t-1})$ from the first round, where
\[
\bar\mu_{i,t} \;=\; \frac{\lambda_i \nu_i + S_i(t)}{\lambda_i + N_i(t)}
\]
is the posterior mean, defined for every arm from round one. Two conditions on the prior replace $1$-regularity. The first is \emph{coverage}: there is a constant $c_1 > 0$ such that, for all small $\delta$,
\[
\mathbb{P}\big(\mu_i \ge 1 - \delta \ \text{and}\ \nu_i \ge 1 - 2\delta\big) \;\ge\; c_1\,\delta,
\]
so a fraction of arms proportional to $\delta$ is both genuinely near-optimal and scored as such by the prior. The second is \emph{calibration}: $|\nu_i - \mu_i| \le \varepsilon$ for a misspecification level $\varepsilon \ge 0$. Coverage plays the role that $1$-regularity plays in Theorem~\ref{thm:asg-beta1}; indeed, if the true means are $1$-regular, coverage holds at every scale $\delta \ge 2\varepsilon$, since every arm with $\mu_i \ge 1 - \delta$ then has $\nu_i \ge 1 - 3\delta/2 \ge 1 - 2\delta$. The argument below only uses scales $\delta \ge 2\varepsilon$.

\paragraph{Dropping the initialization.}
The posterior mean is a weighted average of the prior mean and the empirical mean,
\[
\bar\mu_i(n) \;=\; \frac{\lambda_i}{\lambda_i + n}\,\nu_i \;+\; \frac{n}{\lambda_i + n}\,\widehat\mu_i(n),
\]
where $n$ is the arm's pull count. Consider an arm with $\mu_i \ge 1 - \delta$ and $\nu_i \ge 1 - 2\delta$. On the crossing event of Lemma~\ref{lem:bern-cross}, its empirical mean stays above $1 - 2\delta$ at every pull count $n \ge 1$, and the display then keeps its score at or above $1 - 2\delta$ for every $n \ge 0$, including $n = 0$. In the proof of Theorem~\ref{thm:asg-beta1}, the always-good arms of Lemma~\ref{lem:many-good-arms} acquire their score floor only after the forced initialization pull; here they hold it from the first round without any pull. Coverage then supplies $r \asymp m\delta$ such arms outside an event of probability $e^{-c m \delta}$, exactly as the binomial count of Lemma~\ref{lem:many-good-arms} does in the current proof.

A good part of the argument behind Theorem~\ref{thm:asg-beta1} can now be reused. With $r$ arms holding scores at or above $1 - 2\delta$, Lemma~\ref{lem:softmax-pointwise} bounds the probability of selecting any given arm whose score sits below $1 - 3\delta$ by $r^{-1} e^{-\eta_t \delta}$. Under the schedule~\eqref{eq:schedule} the sum over $t$ is $O(1/r)$ for each such arm, hence $O(m/r) = O(1/\delta)$ in total across arms, which is the same leakage total the current proof carries into the theorem's bound. The bad-event term $T e^{-c m \delta}$ also carries over, with coverage in place of the $1$-regular tail bound of Lemma~\ref{lem:pdelta-lower}.

The remaining part of the argument requires adjustment. The $m$ and $m(1 + \log(1/\delta))$ terms in Theorem~\ref{thm:asg-beta1} come from the pull-count argument (Lemma~\ref{lem:pull-count}), which charges every arm its initialization pull plus the possible excursions of its empirical mean above the threshold. Without initialization, a different accounting is available. Call an arm bad if $\mu_i \le 1 - 5\delta$; arms in between cost at most $5\delta$ per pull, so their total contribution is at most $5T\delta$, absorbed by the $T\delta$ term as in the proof of Theorem~\ref{thm:asg-beta1}. Since $\delta \ge 2\varepsilon$, calibration places a bad arm's prior score below the bar $1 - 3\delta$, so the policy can select it for the first time only through a leakage step, and the expected number of such entries, totaled across bad arms, is the $O(1/\delta)$ above. After an entry, write
\[
W_n \;:=\; (\lambda_i + n)\big(\bar\mu_i(n) - (1 - 3\delta)\big),
\]
which changes by $X_{i,n} - (1 - 3\delta)$ at the $n$-th pull of the arm and therefore drifts downward by at least $2\delta$ per pull in expectation. The time the arm's score spends above the bar is not constant (for an arm just below the cutoff it is of order $1/\delta$), but each such round costs $1 - \mu_i \ge 5\delta$ in regret-to-one, and a stopped-random-walk argument, or a multiplicative Chernoff bound on the arm's success count, should bound the product, the expected regret per entry, by a constant uniformly over $\mu_i \le 1 - 5\delta$. Bad arms would then cost $O(1)$ regret per entry, $O(1/\delta)$ in total, rather than a charge per arm for existing, and the $m$-scale terms would disappear from the bound.

Combining these steps, we expect a result of the following type to be provable: under an arm-specific prior satisfying coverage and calibration with misspecification level $\varepsilon$, Prior-ASG with the schedule~\eqref{eq:schedule} at a suitable threshold $\delta$ achieves Bayes regret $\tilde{O}\big(T\varepsilon + \sqrt{T} + T/m\big)$, with no requirement that $T \ge m$. In particular, when the prior is accurate to order $T^{-1/2}$ and $m \ge \sqrt{T}$, the Bayes regret stays $\tilde{O}(\sqrt{T})$ no matter how large $m$ grows. We leave the precise statement and its proof for future work.

Arm-specific prior information is what makes this argument possible. Under a common prior every unpulled arm has the same score, so the prior cannot single out which arms are worth starting from; the $\nu = 1/2$ ablation in Appendix~\ref{app:nu-ablation} shows this failure empirically. The common optimistic prior of Section~\ref{sec:sim-prior} works through a different mechanism, a rotation through fresh arms similar to the sequential greedy procedure of \citet{bayati2020greedy}, and lies outside the argument sketched here.

\section{Extended Related Work}
\label{app:related}

This appendix expands on the condensed survey in Section~\ref{sec:related}, providing additional context and citations.

\paragraph{Stochastic bandits: optimism and posterior sampling.}
The modern theory of stochastic multi-armed bandits characterizes the exploration--exploitation trade-off through regret guarantees and instance-dependent lower bounds \cite{lai1985adaptive,auer2002finite,audibert2007tuning,bubeck2012regret,lattimore2020bandit}.
A canonical Bayesian approach is posterior sampling (Thompson sampling), originally proposed in \cite{thompson1933likelihood} and analyzed in modern finite-time settings by, e.g., \cite{agrawal2012analysis,russo2016info}.
Related Bayesian decision rules include Bayesian index policies \cite{kaufmann2018bayesian} and the classical Gittins index \cite{gittins1979bandit}.
Our setting adopts the standard Beta--Bernoulli conjugate model, which enables a clean comparison between probability matching (posterior sampling) and softmax/Boltzmann action selection.

\paragraph{Boltzmann (softmax) exploration and its limitations.}
Boltzmann exploration (a.k.a.\ softmax or Gibbs action selection) is widely used in reinforcement learning and bandits as a simple randomized alternative to greedy choice.
Despite its popularity, \citet{cesabianchi2017boltzmann} show that, for stochastic $K$-armed bandits, \emph{any monotone} temperature (learning-rate) schedule can be forced into suboptimal behavior: either it explores too long or it commits too early.
They propose remedies including (i) tuned non-monotone schedules that require knowledge of the horizon and gaps and (ii) per-arm learning rates that explicitly track estimation uncertainty \cite{cesabianchi2017boltzmann}.
This negative result motivates our focus on regimes where \emph{uncertainty-aware} schedules may be unnecessary due to structural properties of the arm distribution.

\paragraph{Many-armed and infinite-armed bandits.}
A long line of work studies bandits with infinitely many arms and tail-based performance criteria, emphasizing how the distribution of arm qualities shapes achievable regret.
Classical and modern examples include infinitely-many-armed formulations \cite{berry1997infinitely,wang2009algorithms,bonald2013twotarget,carpentier2015simple,chaudhuri2018quantile}.
In the Bayesian many-armed regime, \citet{bayati2020greedy} formalize the idea that when the prior places substantial mass on near-optimal arms (via upper-tail regularity conditions), greedy-style policies can enjoy \emph{free exploration}: discarding a poorly performing arm is likely to leave other near-optimal arms available.
They show that subsampled greedy can achieve Bayesian regret scaling of order $\tilde{O}(\max\{m,T/m\})$ (up to prior-dependent exponents and logarithms), implying near-optimal performance with $m\asymp \sqrt{T}$ arms \cite{bayati2020greedy}.
Our analysis builds on this viewpoint, but focuses on softmax/Boltzmann policies that randomize \emph{without} explicit epistemic-uncertainty bonuses.

\paragraph{Free exploration beyond non-contextual bandits.}
Complementary ``free exploration'' phenomena have been identified in contextual bandits, where exploration can be induced by natural diversity in contexts and data \cite{bastani2020mostly,kannan2018smoothed,raghavan2018externalities,hao2020adaptive,abbasi2011oful}.
These works highlight that the need for explicit exploration is sensitive to structural assumptions (e.g., covariate diversity, smoothed analysis, or rich action sets).
Our work studies an orthogonal mechanism: even \emph{without} context diversity, the presence of many near-optimal arms (captured by prior tail regularity) can make epistemic-uncertainty-agnostic softmax policies achieve near-greedy Bayes regret.

\paragraph{Satisficing and multiple near-optimal actions.}
When near-optimal actions are plentiful, identifying the unique optimal action may be information-inefficient.
This perspective is formalized in satisficing and information-theoretic approaches to bandits \cite{russo2014learning,russo2014posterior,russo2018satisficing}.
Conceptually, our ``many near-optimal arms'' regime is aligned with satisficing: regret can remain small even if the learner settles on an $\varepsilon$-optimal arm, provided such arms are sufficiently common under the prior.

\paragraph{RLVR and group-based policy optimization for LLM reasoning.}
The empirical motivation for this paper comes from the rapid adoption of reinforcement learning with verifiable (often binary) rewards for post-training large language models (LLMs) on reasoning-centric tasks.
GRPO was introduced in DeepSeekMath as a memory-efficient alternative to PPO-style actor--critic training for verifiable rewards \cite{shao2024deepseekmath,schulman2017ppo}; it has since become a widely used baseline for RLVR training at scale \cite{deepseekai2025r1,yu2025dapo}.
Several recent works analyze GRPO/RLVR training dynamics and relate them to KL-regularized reweighting.
For example, \citet{mroueh2025grpo} derive explicit optimal-policy forms for variants of GRPO under binary rewards and show how success probability can be amplified through iterative updates.
At the same time, there is ongoing debate about whether RLVR expands a model's reasoning \emph{support} (high-$k$ coverage) or mainly reweights probability mass within the base model's existing support.
\citet{yue2025rlvr} report that RLVR often improves small-$k$ performance while failing to improve---and sometimes degrading---large-$k$ pass@k, suggesting a ``bounded-by-base'' effect; subsequent work revisits this question and argues that RLVR can extend reasoning boundaries under specific protocols and metrics \cite{wen2025implicit,liu2025prorl}.
Additional analyses emphasize training instabilities, entropy collapse, and variance reduction techniques in RLVR pipelines \cite{cui2025entropy,zeng2025shrinkage,liu2025r1zero,yu2025dapo}.

\paragraph{Pass@$k$ as a tail metric and as an objective.}
Pass@k was popularized as a functional-correctness evaluation metric for code generation and is an order-statistic probe of a model's upper tail over solutions \cite{chen2021evaluating}.
Recent RLVR-specific work argues that pass@k is best interpreted as a diagnostic of exploration/coverage rather than a direct optimization target \cite{yu2025passk}, and proposes alternative objectives to mitigate probability concentration and improve pass@k at larger $k$ \cite{peng2025simko}.
Our bandit model adopts an explicit tail-regularity assumption on arm quality; this provides a stylized bridge to pass@k phenomena by translating ``good pass@k'' into ``many near-optimal arms with non-negligible mass.''

\paragraph{Connections to KL-regularized policy learning.}
A recurring theme in RLHF/RLVR is that KL regularization (to a reference or previous policy) acts as a trust-region or mirror-descent constraint, inducing stochastic policies that resemble Gibbs distributions \cite{schulman2015trpo,schulman2017ppo,ouyang2022training}.
Very recent theory work in contextual bandits and RLHF argues that KL regularization alone can induce sufficient exploration under suitable coverage assumptions \cite{zhao2025sharp}.
Our contributions are complementary: we identify a distinct mechanism---prior tail mass / abundance of near-optimal arms---under which uncertainty-agnostic annealed softmax can achieve strong Bayesian regret, offering an alternative explanation for why soft reweighting can be effective even without explicit epistemic exploration.

\paragraph{Surveys and resources.}
For broader perspective on RL for large reasoning models, including RLVR training recipes, datasets, and open problems, see the recent survey \cite{zhang2025survey}.
Reasoning Gym provides a large suite of procedurally generated, verifiable environments intended for RLVR research \cite{stojanovski2025reasoninggym}.

\section{UCB1 in the Many-Armed Regime}
\label{app:ucb1-supplement}

Figure~\ref{fig:regret-ucb1} adds UCB1 to the uninformative-regime comparison at both horizons. UCB1 pulls each arm once, then pulls the arm maximizing $\widehat{\mu}_{i,t-1} + \sqrt{2\log\!\big(1 + t\log(t^2)\big)/N_i(t-1)}$. The algorithm ignores the prior, so one run per $(T, m)$ pair covers all three regimes; we plot it against the uninformative panels, with the $y$-axis on a log scale so the other curves stay readable.

The per-arm exploration bonus forces UCB1 to keep returning to every arm, so its regret keeps growing across the entire grid at both horizons.

\begin{figure}[t]
    \centering
    \includegraphics[width=\textwidth]{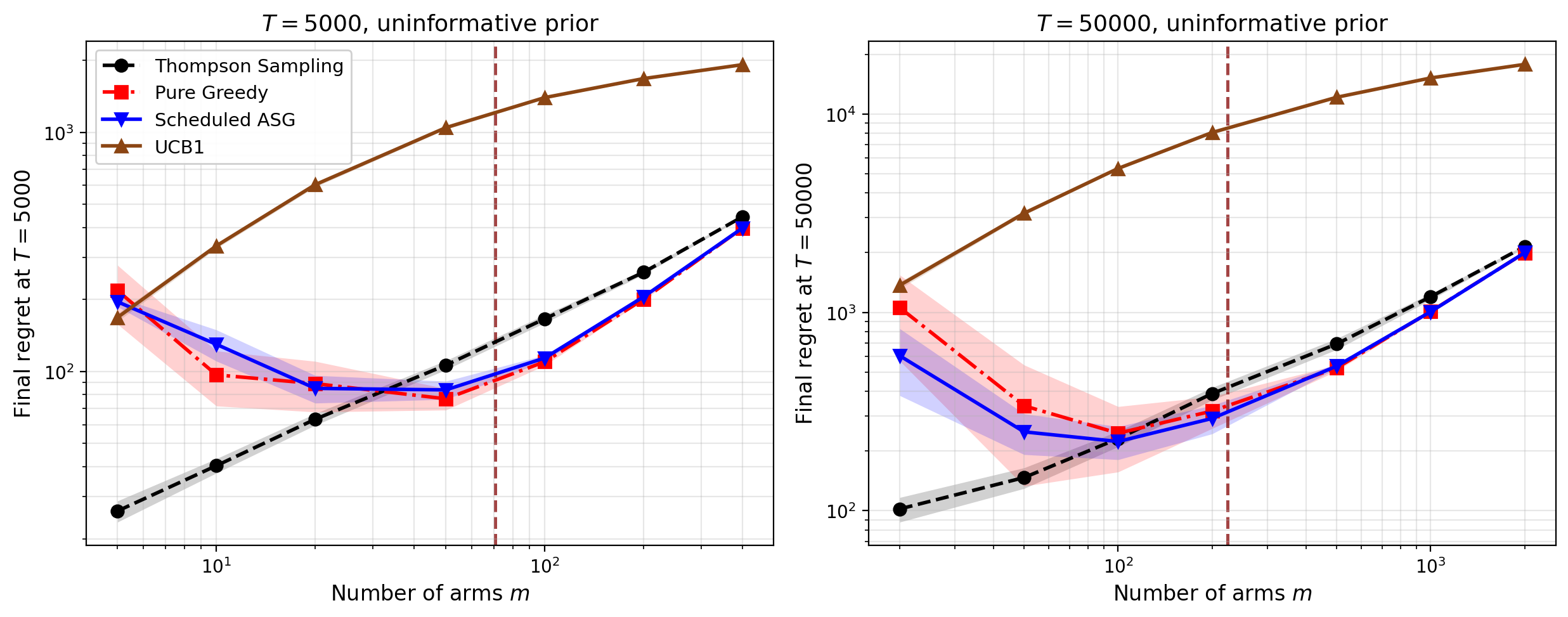}
    \caption{UCB1 against Thompson Sampling, Pure Greedy, and Scheduled ASG in the uninformative regime, at $T = 5000$ (left, $300$ trials for the baselines and $100$ for UCB1) and $T = 50000$ (right, $60$ trials). The $y$-axis is on a log scale; bands are $95\%$ confidence intervals; the vertical dashed line marks $m^\star = \sqrt{T}$. UCB1 regret keeps growing with $m$ at both horizons, and by the right end of each grid it sits far above every other algorithm.}
    \label{fig:regret-ucb1}
\end{figure}

\section{Sensitivity of Prior-Greedy and Prior-ASG to the Optimism Level}
\label{app:nu-ablation}

In the uninformative regime, Section~\ref{sec:sim-prior} runs Prior-Greedy and Prior-ASG with a common optimistic prior mean $\nu = 0.9$. Figure~\ref{fig:regret-nu-ablation} varies $\nu$ over $\{0.5, 0.7, 0.9, 0.99\}$ at $T = 5000$ with $\lambda = 1$ and $300$ trials per point, with Pure Greedy as the reference.

At $\nu = 1/2$ the commitment problem described in Section~\ref{sec:sim-prior} is visible across the whole grid for Prior-Greedy, and from $m = 20$ on for Prior-ASG, whose softmax spreading masks it at the smallest arm counts. Raising $\nu$ to $0.7$ roughly halves the damage but still loses to Pure Greedy over most of the grid. At $\nu = 0.9$ the failure is gone. The value $\nu = 0.9$ balances two demands: high enough that one failure drops an arm below the pool of unpulled arms, and low enough that one success lifts an arm clearly above that pool.

The two variants are different at $\nu = 0.99$. Prior-Greedy stays close to its $\nu = 0.9$ behavior, since the $\argmax$ rule needs only the ordering of scores. For Prior-ASG the regret climbs back at $m = 400$, close to the baseline value of Scheduled ASG. The explanation is that, after one success, an arm scores $0.995$, barely above the $0.99$ of every unpulled arm, so at the resolution of the softmax the winners are nearly indistinguishable from the unpulled pool and the policy spreads its pulls over all of them, which reproduces the initialization cost the variant was meant to avoid.

\begin{figure}[t]
    \centering
    \includegraphics[width=\textwidth]{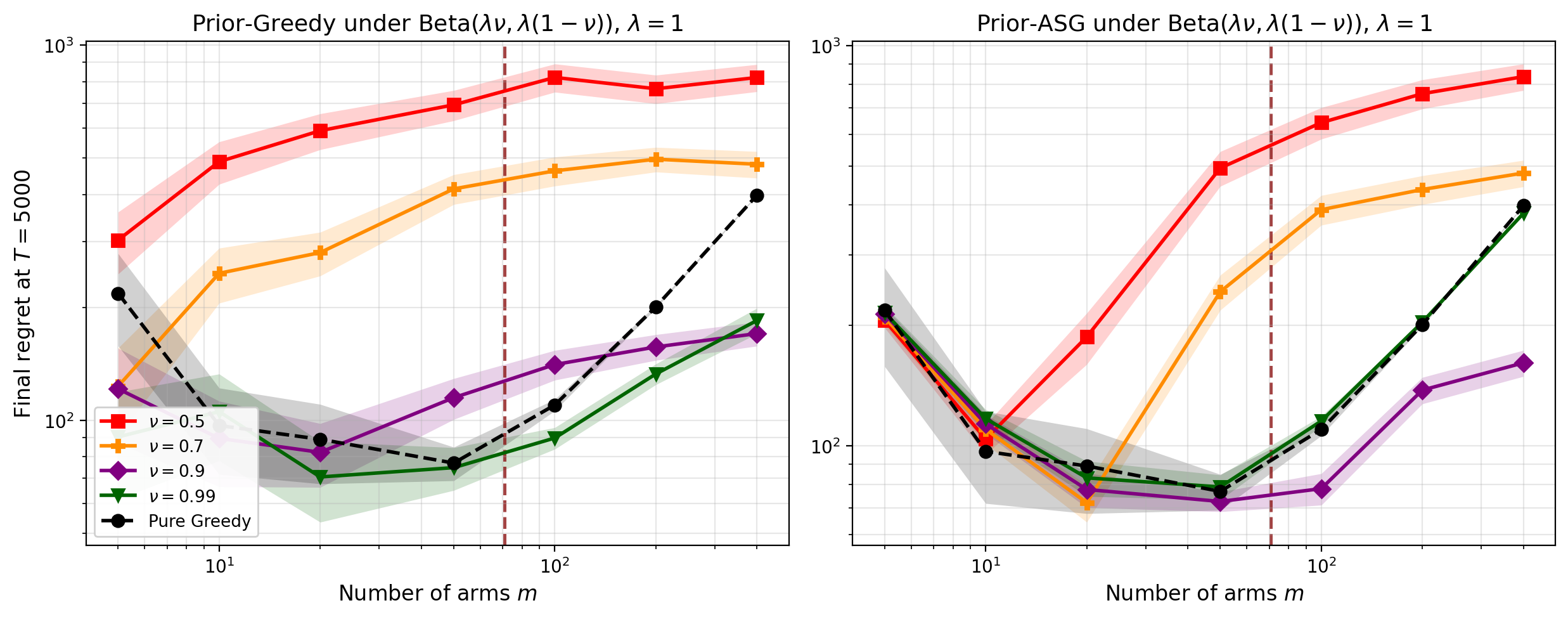}
    \caption{Sensitivity of Prior-Greedy (left) and Prior-ASG (right) to the common optimistic prior mean $\nu$, in the uninformative regime at $T = 5000$ with $\lambda = 1$ and $300$ trials per point. The $y$-axis is on a log scale; bands are $95\%$ confidence intervals; the vertical dashed line marks $m^\star = \sqrt{T} \approx 71$; the black dashed curve is Pure Greedy. Regret improves as $\nu$ rises from $1/2$ to $0.9$ (from $m = 20$ on for Prior-ASG, whose curves are indistinguishable at the smallest $m$); at $\nu = 0.99$ Prior-ASG falls back to the baseline level while Prior-Greedy does not.}
    \label{fig:regret-nu-ablation}
\end{figure}

\end{document}